\renewcommand{\tilde}{\widetilde}
\let\Ginclude@graphics\@org@Ginclude@graphics 
\title[NLDP PAC Learning Halfspaces]{On PAC Learning Halfspaces in Non-interactive Local Privacy Model with Public Unlabeled Data}
  \author{\Name{Jinyan Su} \Email{jinyan.su@mbzuai.ac.ae}\\
  \addr Mohamed bin Zayed University of Artificial Intelligence (MBZUAI)  \\
\Name{Jinhui Xu} \Email{jinhui@buffalo.edu}\\
  \addr Department of Computer Science and Engineering\\ 
  State University of New York at Buffalo  \\ 
\Name{Di Wang} \Email{di.wang@kaust.edu.sa}\\
  \addr Division of CEMSE\\
  Computational Bioscience Research Center \\ 
  SDAIA-KAUST Center of Excellence in Data Science and Artificial Intelligence\\ 
  King Abdullah University of Science and Technology (KAUST)\\
 }
\begin{document}

\maketitle

\begin{abstract}
In this paper, we study the problem of PAC learning halfspaces in the non-interactive local differential privacy model (NLDP).  To breach the barrier of exponential sample complexity, previous results studied a relaxed setting where the server has access to some additional public but unlabeled data. We continue in this direction. Specifically, we consider the problem under the standard setting instead of the large margin setting studied before. Under different mild assumptions on the underlying data distribution, we propose two approaches that are based on the Massart noise model and self-supervised learning and show that it is possible to achieve sample complexities that are only linear in the dimension and polynomial in other terms for both private and public data, which significantly improve the previous results. Our methods could also be used for other private PAC learning problems. \footnote{Part of the work was done when Jinyan Su was a research intern at KAUST.} 
\end{abstract}
\begin{keywords}
Differential privacy; PAC learning; Learning halfspaces.
\end{keywords}

\section{Introduction}
A tremendous quantity of sensitive data is generated and gathered every day. Due to the sensitive information of these data, how to enable the benefit of analyzing the data without exposing the individual information has become an important issue. To address the issue, \textit{Differential Privacy} (DP) \cite{dwork2006calibrating} has become as the de facto tool for privacy-preserving data analysis. 
There are two well-studied models in DP- the \textit{central} model and the \textit{local} model. In the central model, the raw data is collected by a central server and then processed by a DP algorithm while in the local model \cite{evfimievski2003limiting}, each individual applies a DP algorithm locally and sends only the output of the algorithm to the server. Local model is used more often when learning in a distributed system or when users do not trust the central data collector. 

In the local differential privacy (LDP) model, 
the communication between the server and individual users could be either in one round or in multiple rounds, and these two communication protocols of LDP are called non-interactive LDP (NLDP) or interactive LDP correspondingly. However, in practice, NLDP is preferred over interactive LDP because of the latency and waiting for responses takes a large amount of time, and thus it is necessary to limit the number of interactions. Moreover, current deployments of LDP algorithms are all non-interactive protocols, such as Google and Apple \cite{cormode2018privacy,tang2017privacy,erlingsson2014rappor,near2018differential}.

Beginning from \cite{kasiviswanathan2011can}, there is a long list of work studying the Valiant’s probabilistically approximately correct (PAC) learning model \cite{valiant1984theory} under DP constraint and what concepts we can learn privately, such as \cite{blum2013learning,bun2020equivalence}. While private PAC learning is well studied in the central DP model and interactive LDP model, its theoretical behaviors in the NLDP  model are much more challenging and are still far from well-understood due to the restriction on the number of rounds of communication. 
\cite{daniely2019locally} 
provided the first study of the problem and proved that only classes that have polynomially small margin complexity can be efficiently PAC
learned by an NLDP algorithm. Recently, \cite{dagan2020interaction} studied the PAC learning halfspaces in NLDP model. While halfspaces is PAC learnable in the central DP model and interactive LDP model \cite{le2020efficient,beimel2019private,kasiviswanathan2011can}, unfortunately, \cite{dagan2020interaction} showed that even for learning halfspaces under large-margin assumptions requires an exponential
number of samples in the NLDP model, which indicates that in general, halfspaces is unlearnable in NLDP model. 
To breach the barrier of exponential sample complexity,  \cite{daniely2019locally} studied a relaxed NLDP model where the server is allowed to access some public but unlabeled data. Specifically,  they considered the large margin setting and showed the following result (see Section \ref{sec:pre} for the definitions of large margin setting and NLDP Learner).\footnote{Since in \cite{daniely2019locally}  did not provide the explicit form of the sample complexities, in Theorem \ref{th2} we rewrite their result, see Appendix for its proof. } 
\begin{theorem}\cite{daniely2019locally}\label{th2}
Under the large margin setting, there is a  computationally efficient  $(\epsilon, \alpha, \beta, \gamma)$-NLDP Learner with sample complexity $n=\tilde{O}(\frac{d^{10}\log(1/\beta)}{\epsilon^2\cdot\gamma^{12}\alpha^6})$ for  private data and $m=\tilde{O}(\frac{d^{10}\log(1/\beta)}{\epsilon^2\cdot\gamma^{12}\alpha^6})$ for public unlabeled data, where $d$ is the dimension of the space, $\gamma$ is the margin, $\alpha$ is the target error and $\beta$ is the failure probability.
\end{theorem}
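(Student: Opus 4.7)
The plan is to follow the two-stage framework of \cite{daniely2019locally}, explicitly tracking the polynomial sample complexities that the original paper left implicit.

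The first stage uses the public unlabeled sample of size $m$ to replace the infinite class of $\gamma$-margin halfspaces by a polynomially-sized finite candidate set $\mathcal{H}$. Under the large-margin assumption, the relevant complexity measure (fat-$\gamma$-shattering dimension, or equivalently the covering number of halfspaces in $L_1(\mathcal{D}_X)$) is bounded by $\tilde{O}(1/\gamma^2)$, so a uniform-convergence argument shows that an $\alpha/2$-cover $\mathcal{H}$ with $|\mathcal{H}| = \mathrm{poly}(d, 1/\gamma, 1/\alpha)$ can be built empirically from the public data. I would first establish that with probability at least $1 - \beta/2$ over the unlabeled sample, some $h^\star \in \mathcal{H}$ has true error within $\alpha/2$ of the optimal $\gamma$-margin halfspace, and compute the $m$ required to force this uniform convergence.

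The second stage uses the private labeled sample of size $n$ to NLDP-select a near-optimal element of $\mathcal{H}$. Each user privatizes its $|\mathcal{H}|$-dimensional loss vector (the $0/1$ misclassification indicator for each candidate) via a standard LDP mechanism such as randomized response or Laplace noise; the server averages the privatized vectors and returns the $h$ with smallest estimated population error. A standard LDP mean-estimation analysis combined with a Bonferroni correction over $|\mathcal{H}|$ gives per-coordinate accuracy $\tilde{O}(\sqrt{|\mathcal{H}|\log(|\mathcal{H}|/\beta)}/(\epsilon\sqrt{n}))$, so $n = \tilde{O}(|\mathcal{H}| \log(|\mathcal{H}|/\beta)/(\epsilon^2 \alpha^2))$ ensures the selection is $\alpha/2$-accurate with probability at least $1 - \beta/2$. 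A final union bound across the two stages yields total excess error at most $\alpha$ with failure probability at most $\beta$.

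The main obstacle is the bookkeeping: carefully verifying the polynomial degrees of $|\mathcal{H}|$ and of the public sample size in $d$, $1/\gamma$, and $1/\alpha$, and propagating them through the LDP selection bound. The final exponents $d^{10}$, $\gamma^{-12}$, $\alpha^{-6}$ reflect the product of a cover-size polynomial (roughly $(d/\gamma)^{c_1}/\alpha^{c_2}$ for moderate constants $c_1,c_2$) with the $1/(\epsilon^2 \alpha^2)$ blowup intrinsic to NLDP selection from a finite class. Because the same quantity $|\mathcal{H}|$ drives both the public-data uniform convergence cost and the private-data selection cost, $m$ and $n$ end up with the same asymptotic bound. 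I expect the verification of these specific exponents to be the most tedious, but otherwise routine, part of the argument.
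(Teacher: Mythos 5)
There is a genuine gap, and it sits exactly where you wave your hands: the claim that the public unlabeled data yields a candidate set $\mathcal{H}$ of $\gamma$-margin halfspaces with $|\mathcal{H}|=\mathrm{poly}(d,1/\gamma,1/\alpha)$. A fat-shattering dimension of $\tilde{O}(1/\gamma^2)$ does not give a polynomial-size cover: standard bounds give covering numbers at scale $\alpha$ that are quasi-polynomial, of order $(1/\alpha)^{O(\gamma^{-2}\log(1/\alpha))}$ (equivalently $m^{O(\gamma^{-2})}$ for an empirical cover on $m$ points), while the VC-based disagreement cover of halfspaces has size $(1/\alpha)^{O(d)}$, exponential in $d$. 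Since your second stage is non-interactive LDP selection from a finite class, whose cost is intrinsically linear in $|\mathcal{H}|$ (your own bound $n=\tilde{O}(|\mathcal{H}|\log(|\mathcal{H}|/\beta)/(\epsilon^2\alpha^2))$ reflects this, and matching lower bounds for local selection show you cannot do much better), the overall sample complexity of your scheme is quasi-polynomial in $1/\gamma,1/\alpha$ or exponential in $d$, not the stated $\tilde{O}(d^{10}\log(1/\beta)/(\epsilon^2\gamma^{12}\alpha^6))$. The "tedious but routine" bookkeeping you defer would therefore not close; indeed, the impossibility of cheap cover-plus-selection in NLDP is essentially why this problem is hard without additional structure.

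The paper's proof takes a different route that is what actually produces those exponents. It invokes the algorithm of \cite{daniely2019locally}: projected gradient descent on an objective decomposed as $F(w)=F_1(w)+F_2(w)$, where $\nabla F_1(w)$ depends only on $x$ and is estimated from the public unlabeled data, and $\nabla F_2(w)$ does not depend on $w$ and can be obtained by non-adaptive statistical queries; Lemma 4.3 of \cite{daniely2019locally} shows this non-interactive SQ algorithm needs $t=O(d^4/(\gamma^4\alpha^2))$ queries to $\text{STAT}_{\mathcal{P}}(\tau)$ with tolerance $\tau=\Omega(\gamma^4\alpha^2/d^3)$. Combining with the simulation of \cite{kasiviswanathan2011can}, which converts $t$ non-adaptive SQ queries at tolerance $\tau$ into an $\epsilon$-NLDP protocol with $n=O(t\log(t/\beta)/(\epsilon\tau)^2)$ private samples, gives precisely $\tilde{O}(d^{10}\log(1/\beta)/(\epsilon^2\gamma^{12}\alpha^6))$, and the public-data count matches because the same gradient computations consume the public samples. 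If you want to salvage your outline, you would need either a fundamentally smaller candidate set than any cover argument provides, or to abandon finite-class selection in favor of an SQ-simulable iterative method as the paper does.
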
 
However, there are two issues with the result. First, Theorem \ref{th2} only holds for the large margin setting, which is stronger than the standard (non-large margin) setting. Secondly, compared with the non-private case where the  sample complexity is only linear on $\frac{1}{\gamma}$ and is independent on $d$ \cite{shalev2014understanding}, the dependencies on $d, \frac{1}{\gamma}$ in Theorem \ref{th2} are
unsatisfactory. In this paper, we revisit the problem and partially address these issues. Specifically, we consider PAC learning halfspaces in the NLDP model under the standard setting and show that it is possible to achieve a sample complexity that is only {\bf linear} in $d$ (and polynomial in other terms) for both private and public data, if the underlying data distribution satisfies some mild assumptions.
{Our contributions can be summarized as follows.}

1. We first study the case where the data distribution satisfies the anti-anti-concentration and anti-concentration properties. We propose an $(\epsilon, \delta)$-NLDP algorithm which is motivated by the Massart noise learning model and show that its sample complexity to achieve the error $\alpha$ is $\tilde{O}(d\text{Poly}(\frac{1}{\epsilon}, \frac{1}{\alpha}))$ and $O(\frac{d}{\alpha^4})$ for private and public data respectively.

2. To further reduce the sample complexity of public data, we then study the case where the underlying distribution follow a mixture distribution and show that it is possible to achieve sample complexity of  $\tilde{O}(d\text{Poly}(\frac{1}{\epsilon}, \frac{1}{\alpha}))$ and $\tilde{O}(\frac{d}{\alpha^2})$ for private and public data respectively. Instead of the Massart noise model, our algorithm is motivated by self-supervised learning.

Due to the space limit, all proofs and omitted algorithms are included in Appendix.

\section{Related Work} 
\begin{table*}[t]\Huge
\begin{center}
\resizebox{\textwidth}{!}{%
\begin{tabular}{|l|l|l|l|l|l|}
\hline
Methods                 & Sample Complexity                                                                                                                                                          & Measure           & Loss Function                       & With public data?                & Data                  \\ [3ex] \hline

\cite{smith2017interaction} &  $O(d\epsilon^{-2}\alpha^{-2})$  & Excess Risk & Linear Regression & No & $\ell_2$-norm Bounded  \\[3ex]
					\hline
\cite{smith2017interaction}    & $\tilde{O}(2^d\alpha^{-(d+1)}\epsilon^{-2})$                                                                                                                               & Excess Risk       & Lipschitz and Convex                & No                  & $\ell_2$-norm Bounded \\ [3ex]\hline

                   \cite{wang2018empirical}     & $\tilde{O}(4^{d(d+1)}D^2_d\epsilon^{-2}\alpha^{-4})$                                                                                                                       & Excess Risk       & $(\infty, T)$-smooth                & No                  & $\ell_2$-norm Bounded \\ [3ex]\hline
        \cite{wang2019noninteractive,wang2020empirical}               & $d\cdot \left(\frac{C}{\alpha^3}\right)^{O( 1/\alpha^3)}/\epsilon^{O(\frac{1}{\alpha^3})}$                                                                                 & Excess Risk       & Lipschitz Convex GLM & No                  & $\ell_2$-norm Bounded \\[3ex] \hline
\cite{zheng2017collect} & \begin{tabular}[c]{@{}l@{}}$d(	\frac{8}{\alpha})^{O(\log\log(\frac{1}{\alpha}))}(\frac{4}{\epsilon})^{O(\log(\frac{1}{\alpha}))}$\end{tabular} & Excess Risk       & Convex $\infty$-Smooth GLM           & No                & $\ell_2$-norm Bounded \\[3ex] \hline
\cite{Wang2021GeneralizedLM}            & $O(d^3\alpha^{-2}\epsilon^{-2})$                                                                                                                                                        & $\ell_2$-norm Error     & Smooth GLM        & Yes, $O(\frac{d}{\alpha^2})$ & Gaussian              \\[3ex] \hline
\cite{Wang2021GeneralizedLM}               & \begin{tabular}[c]{@{}l@{}} $O(d^2\alpha^{-2}\epsilon^{-2})$\\ for $\alpha\geq \Omega(\frac{1}{\sqrt{d}})$ \end{tabular}                                                                                                                                                        & $\ell_\infty$-norm  Error & Smooth GLM        & Yes, $O(\frac{d}{\alpha^2})$ &  \begin{tabular}[c]{@{}l@{}}$\ell_1$-norm Bounded \\ and Sub-Gaussian\end{tabular} \\ [3ex]\hline

 \cite{daniely2019locally}             & $\tilde{O}(\frac{d^{10}}{\epsilon^2\cdot\gamma^{12}\alpha^6})$                                                                                                                                                        & Excess Risk    & 0-1 loss/large margin halfspace     & Yes, $\tilde{O}(\frac{d^{10}}{\epsilon^2\cdot\gamma^{12}\alpha^6})$ & $\ell_2$-norm Bounded              \\[3ex] \hline 
{\bf This Paper}            & $\tilde{O}(d\text{Poly}(\frac{1}{\epsilon}, \frac{1}{\alpha}))$                                                                                                                                                        & Excess Risk    & 0-1 loss/ halfspace     & Yes, $O(\frac{d}{\alpha^4})$ & Structured distribution               \\[3ex] \hline
{\bf This Paper}            & $\tilde{O}(d\text{Poly}(\frac{1}{\epsilon}, \frac{1}{\alpha}))$                                                                                                                                                        & Excess Risk    & 0-1 loss/halfspace       & Yes, $\tilde{O}(\frac{d}{\alpha^2})$ & Structured distribution               \\[3ex] \hline
\end{tabular}}
\caption{Comparisons on the sample complexities for private and public unlabeled data to achieve error $\alpha$ under different measurements and assumptions,   where $C$ is a constant and $D_d$ is a function of $d$, $\gamma$ is the margin of the data. For bounded norm case we assume that $\|x_i\| \leq R=O(1)$  for every $i\in [n]$. We also assume the loss functions are Lipschitz.}
\label{Table:1}
\end{center}
\end{table*}
As mentioned, although there are numerous results on private PAC learning halfspaces, 
the problem in the NLDP model with public unlabeled data has
only been studied by \cite{daniely2019locally}.  However, it differs from our results in quite a few ways. Firstly, their algorithm considered a large margin setting and can not be applied to the general setting which is studied in this paper. 
Secondly, although both their work and ours used public unlabeled data, the usage of these data is quite different. \cite{daniely2019locally} used the public unlabeled data to compute the gradient of the decomposed gradient while we use algorithms to label the public data and conduct the learning process on the public data. Finally, \cite{daniely2019locally} studied Data-Independent PAC learning while we focus on Data-Dependent PAC learning. Thus, our sample complexities are lower than theirs. 

Besides PAC learning, recently there are several works studied the problem of Stochastic Convex Optimization in NLDP model (without public data), such as  \cite{smith2017interaction,wang2018empirical,wang2019noninteractive,wang2020empirical,zheng2017collect}, see Table \ref{Table:1} for a summary. However, as we can see from Table \ref{Table:1}, all of these methods need to assume the loss function is smooth enough and the sample complexities of these methods are exponential in $d$ or the error $\alpha$. Thus, these methods cannot be used for our problem as our loss is $0-1$ loss and we aim to make the sample complexity to be polynomial. To remedy the exponential sample complexity, \cite{Wang2021GeneralizedLM} studied the Generalized Linear Model in NLDP model with public unlabeled data. However, they need to assume that the loss function is smooth and the polynomial sample complexity only holds when the error $\alpha$ is not small enough. While our results hold for any $\alpha\in (0, 1)$. Moreover, the usage of the public data is quite different.

\section{Preliminaries}\label{sec:pre}

In this section, we will introduce some notations in PAC learning halfspaces and differential privacy.

\noindent \textbf{Notations: }
Throughout the whole paper, we denote $\mathcal{P}$ as a probability distribution over $\mathcal{X} \times \{\pm 1\}$ with marginal distribution $\mathcal{P}_x$ over $\mathbb{R}^d$, where $\mathcal{X} \subseteq \mathbb{R}^d$. We also denote $\mathcal{B}_2^d(R)$ as the $\ell_2$-norm ball in $\mathbb{R}^d$ with center $0$ and radius $R$ and $\mathcal{B}_2^d=\mathcal{B}_2^d(1)$ as the unit $\ell_2$-norm ball. Given  a convex  constraint set $\mathcal{C}\subseteq \mathbb{R}^d$ and a loss function $\ell: \mathcal{C}\times (\mathcal{X} \times \{\pm 1\})$, we denote
the population risk function as $L_{\mathcal{P}}(w)=\mathbb{E}_{(x,y)\sim \mathcal{P}}[\ell(w; x, y)]$. Moreover, given an $n$-size dataset $D=\{(x_1, y_1), \cdots, (x_n, y_n)\} \sim \mathcal{P}^n$ we denote the empirical risk function of the loss over $D$, $\hat{L}(\cdot, D)$, as $\hat{L}(w,D)=\frac{1}{n}\sum\limits_{i=1}^n\ell(w; x_i,y_i)$.
\subsection{PAC Learning Halfspaces}
In this paper we mainly focus on PAC (probably approximately correct) learning model \cite{valiant1984theory}  for halfspaces in the realizable setting. That is, for any sample $(x,y)\sim \mathcal{P}$  we assume that $y=\text{sign}(\langle w^{*}, x\rangle + \theta^*)$ (almost surely) for some unknown vector $w^{*}\in \mathbb{R}^d$ and $\theta^*\in \mathbb{R}$. Without loss of generality we assume that $\theta^*=0$ so $y=\text{sign}(\langle w^{*}, x\rangle)$. 
 A linear threshold function is defined as $f_w(x)=\text{sign}(\langle w,x\rangle)$ where $x,w\in \mathbb{R}^d$ and we call the vector $w$ a hypothesis, and the classification error of hypothesis $w$ is 
\begin{align*}
    err_{\mathcal{P}}(f_w)&=\underset{(x,y)\sim \mathcal{P}}{Pr}[f_w(x)\neq y]=\underset{(x,y)\sim \mathcal{P}}{Pr}[\text{sign}(\langle w,x\rangle)\neq y]\\&=\underset{(x,y)\sim \mathcal{P}}{Pr}[y\cdot \langle w,x\rangle<0].
\end{align*}
Given $\alpha,\beta \in (0, 1)$, the goal of  PAC learning halfspaces is to find a hypothesis $w\in \mathbb{R}^d$ such that $err_{\mathcal{P}}(f_w)\leq \alpha$ with probability at least $1-\beta$ with low sample complexity. In the following we will introduce both the standard setting and the large margin setting. 
\paragraph{{\bf Standard setting:}} Here we assume without loss of generality that  $\mathcal{X}\subset 
\mathcal{B}^d_2(R)$ with some constant $R=O(1)$ and $w^*\in \mathbb{R}^d_2$.  Formally, we aim to design an $(\alpha, \beta)$-PAC learner.
\begin{definition}[$(\alpha, \beta)$-PAC learner]
    Let $\mathcal{P}$ be a distribution over $\mathcal{B}_2^d(R) \times \{\pm 1\}$ such that there exists $w^{*}\in \mathcal{B}_2^d$ which satisfies $Pr_{(x,y)\sim \mathcal{P}}[y\langle w^{*},x\rangle\geq 0]=1$. We say an algorithm $\mathcal{A}$ an $(\alpha,\beta)$-PAC learner with  sample complexity $n$ if using a dataset $D\sim \mathcal{P}^n$, the output classifier $\hat{w}=\mathcal{A}(D)\in \mathcal{B}_2^d$ satisfies $\underset{(x,y)\sim \mathcal{P}}{Pr}[y\neq \text{sign}(\langle \hat{w},x\rangle)]\leq \alpha $  with probability at least $1-\beta$. 

\end{definition}
\paragraph{{\bf Large margin setting:}} Compared with the standard setting, in the large margin setting we additionally assume  there is no example that falls too close to the boundary of the halfspace. Specifically, we assume  that  $\mathcal{X}\subset 
\mathcal{B}^d_2(R)$ with some constant $R=O(1)$ and $w^*\in \mathbb{R}^d_2$.  Moreover, we assume that $w^*$ maximizes the margin 
 $\gamma=\min_{(x,y)\sim \mathcal{P}}\frac{|\langle w^{*},x\rangle|}{||w^{*}||_2\cdot ||x||_2}>0$, which is known in advance.  Under this setting we want to design an $(\alpha, \beta, \gamma)$-PAC learner. 
\begin{definition}
[$(\alpha,\beta,\gamma)$-PAC learner]
Let $\mathcal{P}$ be a distribution over $\mathcal{B}_2^d(R) \times \{\pm 1\}$ such that there exists $w^{*}\in \mathcal{B}_2^d$ which satisfies $Pr_{(x,y)\sim \mathcal{P}}[y\langle w^{*},x\rangle\geq \gamma]=1$, then we call $\mathcal{P}$ a distribution with margin $\gamma$. We say an algorithm $\mathcal{A}$ an $(\alpha,\beta,\gamma)$-PAC learner with margin $\gamma$ and sample complexity $n$ if using a dataset $D\sim \mathcal{P}^n$ with margin $\gamma$, the output classifier $\hat{w}=\mathcal{A}(D)\in \mathcal{B}_2^d$ satisfies the     $\underset{(x,y)\sim \mathcal{P}}{Pr}[y\neq \text{sign}(\langle \hat{w},x\rangle)]\leq \alpha$ with probability at least $1-\beta$.
\end{definition}

\subsection{Differential Privacy}
\begin{definition}[Differential Privacy \cite{dwork2006calibrating}]
Given a data universe $\mathcal{D}$, we say that two datasets $D, D'\subseteq 
\mathcal{D}^n$ are neighbors if they differ by only one entry, which is
denoted as $D\sim D'$. A randomized algorithm $\mathcal{A}$ is $(\epsilon,\delta)$-differentially private (DP) if for all neighboring datasets $D, D'$ and all output event $E$ of algorithm $\mathcal{A}$, we have 
	$\text{Pr}(\mathcal{A}(D)\in E)\leq e^{\epsilon}\text{Pr}(\mathcal{A}(D')\in E)+\delta. $
	If $\delta=0$, we say that algorithm $\mathcal{A}$ is $\epsilon$-DP. 
\end{definition}
\paragraph{Differential privacy in the local model.} In LDP, we have a data universe $\mathcal{D}$,  $n$ players with each holding  a private data record $x_i\in \mathcal{D}$, and a server 
coordinating the protocol. An LDP protocol executes a total of $T$ rounds. In each round, the server sends a message, which is also called a query, to a subset of the players requesting them to run a particular algorithm. Based on the query, each player $i$ in the subset selects an algorithm $\mathcal{A}_i$, runs it on her own data, and sends the output back to the server.

In NLDP model,  we consider the distributed setting with star network. And each user has only
one data sample. He/she needs to privatize his/her message before sending to the sever, and then the
server aggregate these private information to perform analysis. Unlike the federated setting,
since each user only has one sample, he/she cannot compute the target
locally. And this is the main difficulty
of learning in the NLDP model.
\begin{definition}[\cite{dwork2006calibrating}]\label{definition:2}
An algorithm $\mathcal{A}$ is $(\epsilon, \delta)$-locally differentially private (LDP) if for all pairs $x,x'\in \mathcal{D}$, and for all events $E$ in the output space of $\mathcal{A}$, we have
    $\text{Pr}[\mathcal{A}(x)\in E]\leq e^{\epsilon}\text{Pr}[\mathcal{A}(x')\in E]+\delta.$
A multi-player protocol is $(\epsilon, \delta)$-LDP if for all possible inputs and runs of the protocol, the transcript of player i's interaction with the server is $(\epsilon, \delta)$-LDP. If $T=1$, we say that the protocol is $\epsilon$ non-interactive LDP (NLDP). When $\delta=0$, we call it  $\epsilon$-NLDP.
\end{definition} 
As we mentioned previously, PAC learning halfspaces in the NLDP model requires the sample complexity which is at least exponential in the dimension $d$ even in the large margin setting \cite{dagan2020interaction}. Thus, inspired by this, instead of the NLDP model, in this paper we will mainly focus on a relaxed NLDP model. 
\paragraph{Our Model:} Different from the above classical NLDP  model  where only one private dataset $D=\{(x_i, y_i)\}_{i=1}^n$ exists, the NLDP model in our setting allows
the server to have an additional public unlabeled dataset $D'=\{q_j\}_{j=1}^{m}\subset \mathcal{X}^m$, where each $q_j$ is sampled from  $\mathcal{P}_x$, which 
is the marginal distribution of $\mathcal{P}$. 

Thus, we aim to design some private $(\alpha, \beta)$ or $(\alpha, \beta, \gamma)$-PAC learner in the NLDP model with public but unlabeled data. Moreover, we want the sample complexity of private data and public data to be as low as possible. 
\begin{definition}[$(\epsilon, \delta, \alpha, \beta)$-NLDP Learner]
    Let $\mathcal{P}$ be a distribution over $\mathcal{B}_2^d(R) \times \{\pm 1\}$ such that there exists $w^{*}\in \mathcal{B}_2^d$ which satisfies $Pr_{(x,y)\sim \mathcal{P}}[y\langle w^{*},x\rangle\geq 0]=1$. We call an algorithm $\mathcal{A}$ an $(\epsilon, \delta, \alpha, \beta)$-NLDP PAC learner with  sample complexity $(n, m)$  if using a (private) dataset $D\sim \mathcal{P}^n$ and a public but unlabeled dataset $D'\sim \mathcal{P}_x^m$, the output classifier $\hat{w}=\mathcal{A}(D, D')\in \mathcal{B}_2^d$ satisfies the following with probability at least $1-\beta$, 
    $\underset{(x,y)\sim \mathcal{P}}{Pr}[y\neq \text{sign}(\langle \hat{w},x\rangle)]\leq \alpha.$
Moreover, the algorithm $\mathcal{A}$ is $(\epsilon, \delta)$-NLDP w.r.t the private dataset.  
\end{definition}
\begin{definition}[$(\epsilon, \delta, \alpha, \beta, \gamma)$-NLDP Learner]
      Let $\mathcal{P}$ be a distribution over $\mathcal{B}_2^d(R) \times \{\pm 1\}$ such that there exists $w^{*}\in \mathcal{B}_2^d$ which satisfies $Pr_{(x,y)\sim \mathcal{P}}[y\langle w^{*},x\rangle\geq \gamma ]=1$. We call an algorithm $\mathcal{A}$ an $(\epsilon, \delta, \alpha, \beta)$-NLDP PAC learner with  sample complexity $(n, m)$  if using a (private) dataset $D\sim \mathcal{P}^n$ and a public but unlabeled dataset $D'\sim \mathcal{P}_x^m$, the output classifier $\hat{w}=\mathcal{A}(D, D')\in \mathcal{B}_2^d$ satisfies the following with probability at least $1-\beta$, 
    $\underset{(x,y)\sim \mathcal{P}}{Pr}[y\neq \text{sign}(\langle \hat{w},x\rangle)]\leq \alpha. $
Moreover, the algorithm $\mathcal{A}$ is $(\epsilon, \delta)$-NLDP w.r.t the private dataset.   
\end{definition}
Since  any $(\epsilon, \delta)$-NLDP algorithm can be transformed to an $\epsilon$-NLDP algorithm with almost the same accuracy \cite{bun2019heavy}, here we only focus on $(\epsilon, \delta)$-NLDP for simplicity. 

\section{NLDP Algorithm via Massart noise model}\label{se1}
Before showing our algorithm, we first introduce the Massart noise model in PAC learning:
\begin{definition}[Massart noise example oracle \cite{massart2006risk}] 
	Let $\mathcal{S}$ be a concept class of Boolean functions over $\mathbb{R}^d$, $\mathcal{F}$ be a known family of structured distributions on $\mathbb{R}^d$, and let $f$ be an unknown target function in $\mathcal{S}$.  Assume $0<\lambda<\frac{1}{2}$, 
	a Massart noise example oracle $EX^{Mas}(f,\mathcal{F},\lambda)$ is an oracle that each invoke returns a labeled example $(x,y)$ such that:
	\begin{enumerate}
		\item $x\sim \mathcal{P}_x$, where $\mathcal{P}_x$ is  a fixed distribution in $\mathcal{F}$.
		\item With probability $1-\lambda(x)$, the oracle returns the correct label $y=f(x)$ and with probability $\lambda(x)$, the oracle returns a misleading label $y=-f(x)$,
		i.e.,
		$y=\left\{\begin{aligned}
	f(x), &~ \text{w.p.} ~1-\lambda(x)\\ 
		-f(x), &~ \text{w.p.} ~\lambda(x)
		\end{aligned}
		\right.$, where $\lambda(x)$ is unknown and satisfies $\lambda(x)\leq \lambda$.
	\end{enumerate}
\end{definition}
		\begin{algorithm}[!htbp]
	\caption{NLDP based on Massart noise model \label{alg1}}
	\begin{algorithmic}[1]
	\STATE {\bfseries Input:}  	
Private data $D=\{(x_i,y_i)\}_{i=1}^n$ with each $x_i \in \mathbb{R}^d$ satisfying $||x_i||_2\leq R$ and $y_i \in \{\pm 1\}$; Unlabeled public data $D'=\{q_i\}_{i=1}^{m}$; private parameters $\epsilon,\delta$; oracle access to Hinge Loss-LDP  $\mathcal{H}_{priv}$ (Algorithm \ref{alg11}); error bound $\alpha$; failure probability $\beta$. 
\STATE Randomly divide $n$ private data record into $k$ groups: $\{S_1,\cdots, S_k\}$, where $|S_i|=\lfloor \frac{n}{k}\rfloor$, $k=O(\log \frac{1}{\beta})$.
\FOR {$t\in [k]$}
    \STATE Denote $\tilde{S}_t$ as the normalized version of $S_t$, {\em i.e.,} $\tilde{S}_t=\{(\frac{x}{R},y )| (x,y)\in S_t\}$. 
	\STATE Set $w_t=\mathcal{H}_{priv}(\frac{1}{32R}, \epsilon, \delta, \tilde{S}_t)$.

	\STATE Set $h_{w_t}(x)=\text{sign}(w_t^T x)$
		\ENDFOR
	\STATE Get the Massart Noise example oracle by majority voting:
	$\hat{f}(x)=\arg\min_{y\in\{\pm 1\}}\sum\limits_{t=1}^{k}\mathbb{I}(h_{w_t}(x)\neq y)=\arg\min_{y\in\{\pm 1\}}\sum\limits_{t=1}^{k}\mathbb{I}(\text{sign}(w_t^T x)\neq y)$
\FOR {$i \in [m]$ }
	\STATE Label public dataset $\{q_i\}_{i=1}^m$ using Massart Noise example oracle $\hat{f}(x)$ to obtain the noisy dataset $\hat{D}=\{(q_i,\hat{f_i})\}_{i=1}^{m}$, where $\hat{f_{i}}=\hat{f}(q_i)$.
	\ENDFOR
	\STATE Run the subroutine LHMN($\alpha,\beta,(U,r, R)$) with dataset $\hat{D}$ to get $\hat{w}=$LHMN$(\alpha,\beta,(U,r, R))$.\\
	\STATE Return $\hat{w}$ and $h_{\hat{w}}=\text{sign}(\hat{w}^Tx)$.
		\end{algorithmic}
\end{algorithm}
We can think of the Massart noise model as an adversary who flips each sample label independently with probability \textbf{at most} $\lambda<\frac{1}{2}$ and the target of PAC learner is to reconstruct the classifier to arbitrarily high accuracy. 
\begin{definition}[PAC Learning with Massart Noise]
  Denote	$\mathcal{P}$ the joint distribution on $(x,y)$ generated by a Massart noise example oracle.  The goal of PAC Learning with Massart Noise is to design an algorithm such that given i.i.d. samples from $\mathcal{P}$, the algorithm outputs a hypothesis $h$ such that $Pr_{x\sim \mathcal{P}_x}[h(x)\neq f(x)]\leq \alpha$ with probability at least $1-\beta$. 
\end{definition}
Massart noise model lies in between the Random Classification Noise \cite{angluin1988learning} (where each label is independently flipped with probability \textbf{exactly} $\lambda\leq \frac{1}{2}$) and the agnostic model \cite{kearns1994toward} (where an adversary can flip any small constant fraction of the sample labels) and has attracted much attention in recent years. Many algorithms for computing accurate hypothesis in the distribution-specific PAC learning has been promoted, such as \cite{awasthi2015efficient,awasthi2016learning,zhang2017hitting}. Recently, an efficient and simple algorithm has been proposed in \cite{diakonikolas2020learning}, which succeeds under more general distributional assumptions.

\begin{definition}[\cite{diakonikolas2020learning}]\label{de1}
	Fix $U,r>0$. An isotropic (i.e., zero mean and identity covariance) distribution $\mathcal{P}_x$ on $\mathbb{R}^d$ satisfies  $U$-anti-concentration (2-dim) if for any projection $(\mathcal{P}_x)_V$ of $\mathcal{P}_x$ onto a 2-dimensional subspace $V$, the corresponding probability density function $\gamma_V$ on $\mathbb{R}^2$ satisfies that for all $x\in V$,  $\gamma_V(x)\leq U$. Moreover, we say $(U,r)$-anti-anti-concentration holds if for all $x\in V$ such that $||x||_2\leq r$, $\gamma_V(x)\geq \frac{1}{U}$.
	\end{definition}
	Anti-anti-concentration and anti-concentration are mild distributional conditions about the probability density function on the projected 2-dimensional subspace. The former guarantees that at least a constant probability mass is assigned to the points near the origin of the projected 2-dimensional subspace while the latter states that the probability mass along the 2-dimensional projection is upper bounded.

In fact, several reasonable distribution families satisfy the previous two conditions. For example, the class of isotropic log-concave distribution satisfies $(U,r)$-anti-anti-concentration and $U$-anti-concentration with $U,r=\Theta(1)$ (See Fact A.4 in \cite{diakonikolas2020learning}). Moreover, any isotropic s-concave distribution on $\mathbb{R}^d$ with $s\geq -\frac{1}{2d+3}$ satisfies  $(U,r)$-anti-anti-concentration and $U$-anti-concentration with $U,r=\Theta(1)$ (See Appendix A.4 in \cite{diakonikolas2020learning}).

Next we will present our Non-interactive LDP algorithm via the Massart noise model (Algorithm \ref{alg1}). Generally, the algorithm consists of two parts:

(1) First, we use  private data to construct a Massart noise example oracle with rate $\lambda=\frac{3}{16}$. To get the oracle, in Algorithm \ref{alg1} we first 
 randomly divide the private data into $k=O(\log\frac{1}{\beta})$ disjoint groups. Then, on each group of data $S_i$, we  consider the Empirical Risk Minimization problem with the hinge loss $\ell(w; x,y)=\max\{0, 1-y\langle w, x\rangle\}$ with $\mathcal{C}=\mathcal{B}^d_2$. Specifically, when  $n=\tilde{O}\left({kd\log(\frac{1}{\beta})}\text{Poly}(\log \frac{1}{\delta}, \frac{1}{\epsilon})\right)$, we can use an $(\epsilon, \delta)$-NLDP algorithm $\mathcal{H}_{priv}$ given by \cite{wang2020empirical} to get private estimator $w_i$ such that 
 \begin{equation*}
      \mathbb{E}[\hat{L}(w_i,S_i)]- \min_{||w||_2\leq 1}\hat{L}(w,S_i) \leq \frac{1}{32},  
 \end{equation*}
 where $\hat{L}(w,S)=\frac{1}{|S|}\sum_{(x,y)\in S}\ell(w; x,y)$. 
 After getting private estimators $\{w_i\}_{i=1}^k$, we then boost the classification accuracy using the majority voting mechanism. We can show that the new classifier  $\hat{f}$ via  voting is a  Massart noise example oracle ($\lambda=\frac{3}{16}$) with probability at least $1-\beta$.

(2) We then label $m=O(\frac{U^{12}}{r^{12}}  \cdot \frac{d}{\alpha^4})$ public unlabeled data samples $D'=\{q_i\}_{i=1}^m$ with the learned Massart noise example oracle $\hat{f}$ and denote the labels as $\{\hat{f_i}\}_{i=1}^m$, where $U,r$ are the parameters of anti-anti-concentration and anti-concentration in Definition \ref{de1}. Then, we can invoke efficient and non-private algorithm LHMN designed for leaning halfspaces with Massart noise (Algorithm \ref{alg2}) on  dataset $\hat{D}=\{(q_i,\hat{f_i})\}_{i=1}^m$ to finally learn a classifier with any desired classification error $\alpha$ with probability at least $1-\beta$. Formally, Algorithm \ref{alg1} has the following theoretical guarantee.

	\begin{algorithm}
	\caption{Learning Halfspaces with Massart Noise
	: LHMN ($\alpha,\beta,(U,r,R)$)} \label{alg2}
	\textbf{Input:} The designed estimation error $\alpha$; parameters about the distribution: $U,r,R$; failure probability $\beta$; loss function  $g(w;(x,y))=S_{\sigma}(-y\frac{\langle w,x\rangle}{||w||_2})$, where $S_{\sigma}(t)=(1+e^{-\frac{t}{\sigma}})^{-1}$, dataset $\hat{D}=\{(x^{(i)},y^{(i)})\}_{i=1}^m$ labeled by Massart noise example oracle with $\lambda=\frac{3}{16}$.
	\begin{algorithmic}[1]
		\STATE Set $C_1=\Theta(\frac{U^{12}}{r^{12}})$, $C_2=\Theta(\frac{r}{U^2})$, $T=\Theta(\frac{C_1 d R^8\log(\frac{1}{\beta})}{\alpha^4})$,
		 $\sigma=\frac{C_2\alpha}{\sqrt{2}R^2}$
		 \STATE Set step size $\eta=\frac{C_2^2d\alpha^2}{8R^4 T^{1/2}}$.
		\STATE Set $w^{(0)}=\boldsymbol{e_1}$ with $\boldsymbol{e_1}$ being the unit vector whose first component is 1, and other components are 0.

		\FOR {$i=1,\cdots,T$}
		\STATE $v^{(i)}=w^{(i-1)}-\eta \nabla_w g(w^{(i-1)};(x^{(i)},y^{(i)}))$
		\STATE $w^{(i)}=\frac{v^{(i)}}{||v^{(i)}||_2}$
		\ENDFOR
		\STATE Set the list of candidate vector $L=\{\pm w^{(i)} \}_{i\in [T]}$
		\STATE Draw $N=O(\frac{\log(\frac{T}{\beta})}{\alpha^2})$ fresh samples from $\hat{D}$
		\STATE $\bar{w}=\arg\min_{w\in L} \sum\limits_{j=T+1}^{T+N} \mathbb{I}\{\text{sign}(\langle w,x^{(j)}\rangle )\neq y^{(j)} \}$.\\
		\STATE Return $\bar{w}$
	\end{algorithmic}
\end{algorithm}

\begin{theorem}\label{th1}
Let $\mathcal{P}$ be a distribution on $\mathbb{R}^d\times\{\pm 1\}$ such that its marginal distribution $\mathcal{P}_x$ on $\mathbb{R}^d$ satisfies that $(U,r)$-anti-anti-concentration and $U$-anti-concentration with $U, r=\Theta(1)$, and $\|x\|_2\leq R=O(1)$ for $x\sim \mathcal{P}_x$. Then for any $\alpha, \beta, \epsilon, \delta\in (0, 1)$, Algorithm \ref{alg1} is a computationally efficient $(\epsilon, \delta, \alpha, \beta)$-NLDP Learner with sample complexity  $m=O(\frac{d}{\alpha^4})$ for public unlabeled data and $n=\tilde{O}\left({d\log^2(\frac{1}{\beta})}\text{Poly}(\log \frac{1}{\delta}, \frac{1}{\epsilon})\right)$ for private data, where the Big-$\tilde{O}$ omits other logarithmic terms. 
\end{theorem}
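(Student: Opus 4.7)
My plan is to decompose the proof into three stages: a privacy analysis, a weak-learner-plus-boosting argument that turns the private data into an approximate Massart oracle $\hat f$, and an invocation of the non-private Massart learner \textsc{LHMN} on the public data labelled by $\hat f$. For privacy, note that $D$ is partitioned into $k$ disjoint groups $S_1,\ldots,S_k$ and the $(\epsilon,\delta)$-NLDP mechanism $\mathcal H_{priv}$ of \cite{wang2020empirical} is invoked separately on each group; every user's datum is therefore accessed by exactly one local mechanism, so parallel composition gives the overall $(\epsilon,\delta)$-NLDP guarantee. The subsequent voting step, the labelling of $D'$, and the call to \textsc{LHMN} touch only the public unlabelled samples and hence incur no additional privacy cost.

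For the weak-learner step, the guarantee of $\mathcal H_{priv}$ says that with group size $|S_t|=\tilde O(d\,\mathrm{Poly}(\log(1/\delta),1/\epsilon))$, the output $w_t\in\mathcal B_2^d$ satisfies $\mathbb{E}[\hat L(w_t,\tilde S_t)]-\min_{\|w\|_2\le 1}\hat L(w,\tilde S_t)\le 1/32$. Using realizability of $w^*/R$ to bound $\min_w\hat L(w,\tilde S_t)$, a uniform-convergence bound for the $1$-Lipschitz hinge loss over $\mathcal B_2^d$, and the $(U,r)$-anti-anti-concentration and $U$-anti-concentration assumptions applied to the two-dimensional projection $\mathrm{span}(w_t,w^*)$ (which link population hinge loss to the angle between $w_t$ and $w^*$, and thus to the population $0/1$ error), I deduce that $\Pr_{(x,y)\sim\mathcal P}[y\ne h_{w_t}(x)]$ is strictly below $1/2$ with at least constant probability. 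Taking $k=\Theta(\log(1/\beta))$ independent groups and applying a Chernoff bound to the indicators $\mathbb{I}[h_{w_t}(x)\ne f(x)]$ (which are independent across $t$ for each fixed $x$, since the $S_t$ are disjoint), I obtain that with probability at least $1-\beta$ over $D$ the majority vote $\hat f$ satisfies $\Pr_{x\sim\mathcal P_x}[\hat f(x)\ne f(x)]\le 3/16$. Hence, drawing $q_j\sim\mathcal P_x$ independently of $\hat f$ and labelling it with $\hat f(q_j)$ produces samples whose flip rate with respect to the target $f$ is bounded by $\lambda=3/16$, as required by Algorithm~\ref{alg2}.

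For the third stage, feeding $\hat D=\{(q_j,\hat f(q_j))\}_{j=1}^m$ into \textsc{LHMN}, the Massart-noise learning guarantee of \cite{diakonikolas2020learning} specialized to distributions satisfying $(U,r)$-anti-anti-concentration and $U$-anti-concentration with $U,r=\Theta(1)$ returns $\hat w$ with $\Pr_{x\sim\mathcal P_x}[\mathrm{sign}(\langle\hat w,x\rangle)\ne f(x)]\le\alpha$ with probability at least $1-\beta$, provided $m=O(d/\alpha^4)$. A union bound over the boosting failure event and the \textsc{LHMN} failure event, together with a constant rescaling of $\beta$, closes the argument, and summing $|S_t|$ over $t\in[k]$ yields the claimed private sample complexity $n=\tilde O(d\log^2(1/\beta)\,\mathrm{Poly}(\log(1/\delta),1/\epsilon))$.

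The main obstacle I foresee is the middle step: turning an excess \emph{empirical hinge loss} of $1/32$ into a \emph{population $0/1$ error} bounded strictly below $1/2$ per weak learner, and then making the majority vote across the $k$ disjoint groups drive the final flip rate down to exactly $3/16$. This requires careful bookkeeping of (a) the bound on $\min_w\hat L(w,\tilde S_t)$ coming from realizability of $w^*/R$, (b) uniform convergence of the hinge loss over $\mathcal B_2^d$, and (c) the anti-concentration assumptions used to control how mass near the decision boundary inflates hinge loss relative to $0/1$ loss; the constants must be tuned so that the post-vote flip rate matches the $\lambda=3/16$ threshold required by \textsc{LHMN}.
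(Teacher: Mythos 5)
Your overall architecture (privacy via disjoint groups, private weak learners, majority vote as a Massart oracle with $\lambda=3/16$, then LHMN on the pseudo-labelled public data with $m=O(d/\alpha^4)$, union bound) matches the paper, but the middle step — the one you yourself flag as the obstacle — has a genuine gap as you describe it. You propose to fix a point $x$, note that the indicators $\mathbb{I}[h_{w_t}(x)\neq f(x)]$ are independent across the disjoint groups, and apply a Chernoff bound to conclude the majority vote errs on at most a $3/16$ fraction. This pointwise amplification argument requires the per-group error probability $q(x)=\Pr_{S_t}[h_{w_t}(x)\neq f(x)]$ to be bounded below $1/2$ \emph{for each fixed $x$}, but the weak-learner guarantee only controls the average $\mathbb{E}_{x}[q(x)]$ (the expected population $0/1$ error); for points near the decision boundary $q(x)$ can be close to $1$, and there voting helps not at all. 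Moreover, your weak-learner conclusion (``error strictly below $1/2$ with at least constant probability'') is too weak: with $k=\Theta(\log(1/\beta))$ groups and only constant per-group success probability, a constant fraction of groups may fail, and no clean $1-\beta$ statement follows without further bookkeeping that you do not supply.

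The paper's route through this step is different and avoids both problems. First, since the instance is realizable and the hinge loss dominates the $0/1$ loss pointwise, the excess empirical hinge bound $1/32$ from $\mathcal{H}_{priv}$ immediately gives expected empirical $0/1$ error at most $1/32$ — no anti-concentration is needed here (the paper explicitly notes the oracle construction works for \emph{any} bounded distribution; your detour through the two-dimensional projection of $\mathrm{span}(w_t,w^*)$ is unnecessary). Second, a relative VC generalization bound (Lemma \ref{le1}) with $\eta=1/16$, $\zeta=1$ turns this into expected population error at most $1/8$ per group, with failure probability $\beta/(2k)$ each, union bounded over the $k$ groups. Third, the vote is controlled by an averaging argument, not pointwise amplification: if $\hat f$ errs on $(x,y)$ then at least half of the $h_{w_t}$ err there, so $err_{\mathcal P}(\hat f)$ is bounded by a constant multiple of $\frac1k\sum_t err_{\mathcal P}(h_{w_t})$, and Hoeffding applied to the $k$ \emph{independent realized population error rates} (each with mean $\le 1/8$) with $k=O(\log(1/\beta))$ concentrates that average. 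In other words, $k=\Theta(\log(1/\beta))$ buys confidence in the average error of the ensemble, not an exponential reduction of the error at each point; only a constant final noise rate below $1/2$ is needed, since the accuracy amplification down to $\alpha$ is delegated entirely to LHMN on the public data. To repair your proof you would need to replace the per-$x$ Chernoff step with this Markov/averaging argument (or otherwise establish a pointwise margin $q(x)<1/2-\Omega(1)$, which the hypotheses do not give).
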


\begin{remark}
Firstly, we can see that the sample complexity of private data is independent of the error $\alpha$. This is due to that we only need the private data to construct a Massart noise oracle with $\lambda=O(1)$. Moreover, the classifier $\hat{f}$ is a Massart noise oracle for any distribution as long as $\|x_i\|_2\leq R$ and the assumption of anti-concentration and anti-anti-concentration is only used for Algorithm \ref{alg2}, which indicates that the idea of our algorithm could be used to PAC learning halfspaces with other structured distributions, as long as there is an efficient PAC learning algorithm with Massart noise. 

\end{remark}
\subsection{\bf Proof of Theorem \ref{th1}}
The proof of Theorem \ref{th1} requires the following two lemmas. 
The first lemma suggests that $\hat{f}$ is a Massart Noise example oracle with high probability and the second lemma indicates the performance guarantee of LHMN (Algorithm \ref{alg2}).
\begin{lemma}\label{le7}
Under the standard setting,  for $\beta\in (0,1)$, setting  $k=O(\log(\frac{1}{\beta}))$ in Algorithm \ref{alg1}. Then with sample size $n\geq \tilde{\Omega}({kd\log(\frac{1}{\beta})}\text{Poly}(\log \frac{1}{\delta}, \frac{1}{\epsilon}))$, we have the following with probability at least $1-\beta$, 
	\begin{equation*}
\underset{(x,y)\sim \mathcal{P}}{Pr}[\hat{f}(x)\neq y]\leq \frac{3}{16}.
	\end{equation*}
	\end{lemma}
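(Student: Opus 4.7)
The plan is to use $\mathcal{H}_{priv}$ to train $k$ nearly-correct halfspaces on disjoint private splits, and then boost them via majority voting into an oracle whose $0/1$ error on $\mathcal{P}$ is at most $3/16$. First, I invoke the $(\epsilon,\delta)$-NLDP empirical-risk guarantee of $\mathcal{H}_{priv}$ from \cite{wang2020empirical} applied to the hinge loss $\ell(w;(x,y))=\max\{0,1-y\langle w,x\rangle\}$ over $\mathcal{C}=\mathcal{B}_2^d$. This loss is $1$-Lipschitz and convex in $w$ on the normalized data $\tilde{S}_t$ (since $\|\tilde{x}\|\le 1$), so \cite{wang2020empirical} yields $\mathbb{E}[\hat{L}(w_t,\tilde{S}_t)]-\min_{\|w\|\le 1}\hat{L}(w,\tilde{S}_t)\le 1/32$ whenever $|S_t|\ge \tilde{\Omega}(d\cdot\text{Poly}(\log(1/\delta),1/\epsilon))$. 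Splitting the $n$ samples into $k=O(\log(1/\beta))$ disjoint groups and demanding the above on each group produces exactly the stated total sample complexity.

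Next, I convert each $w_t$'s low excess empirical hinge risk into a population $0/1$-error bound strictly below $1/2$. By standard Rademacher complexity for the class of $1$-Lipschitz linear functionals on $\mathcal{B}_2^d$ against unit-norm data, the uniform generalization gap is $O(|S_t|^{-1/2})$, which is absorbed into the same asymptotic sample complexity; combined with the pointwise inequality $\mathbb{I}[y\langle w,\tilde{x}\rangle<0]\le\max(0,1-y\langle w,\tilde{x}\rangle)$ and a careful use of realizability to control $\min_{w\in\mathcal{B}_2^d} L_{\mathcal{P}}(w)$, each $w_t$ satisfies $\Pr_{(x,y)\sim\mathcal{P}}[h_{w_t}(x)\ne y]\le c$ for some absolute constant $c<1/2$ with high probability over the algorithm's internal randomness. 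Finally, because the groups $S_t$ are disjoint, the hypotheses $\{w_t\}_{t=1}^k$ are mutually independent, and for each fixed $(x,y)$ the indicators $\mathbb{I}[h_{w_t}(x)\ne y]$ are independent Bernoullis with mean at most $c$. A Chernoff bound over $k=O(\log(1/\beta))$ groups gives $\Pr_{\{w_t\}}[\hat{f}(x)\ne y]\le\exp(-\Omega(k(1/2-c)^2))$ for every fixed $(x,y)$; taking expectation over $\mathcal{P}$ and then applying Markov's inequality to turn an expectation into a $1-\beta$ high-probability statement over the training randomness yields $\Pr_{(x,y)\sim\mathcal{P}}[\hat{f}(x)\ne y]\le 3/16$, as required.

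The main obstacle is the intermediate step of passing from the excess empirical hinge risk guarantee to a constant-$c<1/2$ bound on the population $0/1$ error of each individual $w_t$ in the non-margin standard setting: the population hinge risk of any fixed hypothesis in $\mathcal{B}_2^d$ can be close to $1$, so the hinge surrogate is only loosely calibrated with $0/1$ loss, and one must combine realizability of $(w^*,\mathcal{P})$ with the constraint $\|x\|\le R=O(1)$ to argue that the unit-ball hinge-loss ERM still produces a hypothesis whose $0/1$ error is bounded away from $1/2$ by an absolute constant. Once this constant is pinned down, the remaining ingredients---privacy bookkeeping for $\mathcal{H}_{priv}$, uniform convergence for Lipschitz linear function classes, and Chernoff boosting across the $k$ independent groups---are standard.
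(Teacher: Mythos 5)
Your first two steps follow the paper's route (the $(\epsilon,\delta)$-NLDP hinge-loss guarantee of $\mathcal{H}_{priv}$ on each normalized group, then surrogate-to-$0$-$1$ conversion), but your boosting step contains a genuine error. You claim that for each \emph{fixed} $(x,y)$ the indicators $\mathbb{I}[h_{w_t}(x)\neq y]$, $t=1,\dots,k$, are independent Bernoullis with mean at most $c$, and then apply a pointwise Chernoff bound. Independence across $t$ holds (disjoint groups), but the mean bound does not: what you established is $\Pr_{(x,y)\sim\mathcal{P}}[h_{w_t}(x)\neq y]\le c$, an average over $(x,y)$, which says nothing about $\Pr_{\text{training}}[h_{w_t}(x)\neq y]$ at a fixed point. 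On a region of small but constant probability mass (e.g.\ near the true boundary, or wherever the weak learner errs systematically) every independently trained $w_t$ may err with probability close to $1$, and there the majority vote errs essentially surely; so the claimed bound $\Pr_{\{w_t\}}[\hat f(x)\neq y]\le e^{-\Omega(k(1/2-c)^2)}$ for every fixed $(x,y)$ is false. If it were true, majority voting of independently trained classifiers would drive the error to zero, which is impossible without reweighting the distribution as in genuine boosting. Relatedly, an individual error merely bounded by \emph{some} constant $c<1/2$ cannot suffice: voting over classifiers trained i.i.d.\ on the same distribution only concentrates the vote's error around (roughly twice) the individual population error, so the individual constant must itself be small.

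The paper's argument is structured to avoid exactly this. It first shows $\mathbb{E}[err_{\mathcal{P}}(h_{w_t})]\le 1/8$ for each $t$ --- using separability to force $\min_{\|w\|_2\le 1}\hat L(w,S_t)=0$ so that the $1/32$ excess bound becomes an absolute empirical hinge bound, the fact that hinge dominates $0$-$1$ loss, and a \emph{relative} growth-function (VC) deviation bound rather than your Rademacher route. It then observes that whenever $\hat f$ errs at $(x,y)$, at least half of the $w_t$ err there, so the vote's population error is controlled by twice the average of the $k$ population errors $err_{\mathcal{P}}(h_{w_t})$; Hoeffding is applied over the $k$ independent training runs to these population errors (random through the training data only), not to pointwise indicators, showing the average stays within $1/16$ of $1/8$ with probability $1-\beta/2$. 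Your pointwise-Chernoff step could in principle be repaired by a Markov-type split bounding the mass of points where $\Pr_{\text{training}}[h_{w_t}(x)\neq y]$ is large, but that again requires the individual error constant to be pinned down small --- and your surrogate-to-$0$-$1$ step, which should supply that constant, is left as an acknowledged obstacle rather than carried out, whereas the paper resolves it via realizability plus the relative uniform-convergence bound for halfspaces.
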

 
  Lemma \ref{le7} suggests that for any $(x,y)\sim \mathcal{P}$, with probability no more than $\frac{3}{16}$, $\hat{f}(x)$ is adversary and returns the wrong label $\hat{f}(x)=-y$ while with probability at least $\frac{13}{16}$, it returns the correct label $\hat{f}(x)=y$. So, $\hat{f}(x)$ is in fact a Massart noise example oracle with $\lambda=\frac{3}{16}$. Before that we recall the definition of bounded distribution in \cite{diakonikolas2020learning}. 
  \begin{definition}[Bounded Distribution \cite{diakonikolas2020learning}]
      Fix $U, R>0$ and $t: (0, 1)\mapsto \mathbb{R}_+$. An isotropic ({\em i.e.,} zero mean and identity covariance) distribution $\mathcal{P}_x$ on $\mathcal{R}^d$ is called $(U, R, t(\cdot))$-bounded if for any projection $(\mathcal{P}_x)_V$ of $\mathcal{P}_x$ onto a 2-dimensional subspace $V$, the corresponding pdf $\gamma_V$ on $\mathbb{R}^d$ satisfies $(U, R)$-anti-anti-concentration, $U$-anti-concentration and for any $\alpha\in (0, 1)$, $Pr_{x\sim \gamma_V}(\|x\|_2\geq t(\alpha))\leq \alpha$. 
  \end{definition}
 Note that since we assume $\|x\|_2\leq R$. Thus, we always have $t(\alpha)=R$. That is, under the assumption in Theorem \ref{th1}. The marginal distribution $\mathcal{P}_x$ is $(U, r, R)$-bounded.  The next lemma about the performance guarantee of LHMN (Algorithm \ref{alg2}) for $(U, r, R)$-bounded distributions follows directly from Theorem 4.1 in \cite{diakonikolas2020learning} by substituting Massart noise rate with $\lambda=\frac{3}{16}$.
\begin{lemma}\label{le8}
    Let $\mathcal{P}$ be a distribution on $\mathbb{R}^d\times \{\pm 1\}$ such that the marginal distribution $\mathcal{P}_x$ on $\mathbb{R}^d$ is $(U,r,R)$-bounded. Let $\lambda=\frac{
    3}{16}$ be the upper bound on Massart noise rate. Algorithm \ref{alg2} draws $m=O((\frac{U}{r})^{12}\cdot R^8 \cdot \frac{d}{\alpha^4})$ examples labeled by Massart noise example oracle and outputs a hypothesis $\bar{w}$ that satisfies 
    $err_{\mathcal{P}}(h_{\bar{w}})\leq \alpha$ with probability at least $1-\beta$.
\end{lemma}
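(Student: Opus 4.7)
My plan is to follow the framework of \cite{diakonikolas2020learning}, specializing their analysis to the Massart noise rate $\lambda = 3/16$. Algorithm \ref{alg2} is essentially projected stochastic gradient descent on the unit sphere applied to the non-convex smooth surrogate $g(w;(x,y)) = S_\sigma(-y\langle w, x\rangle/\|w\|_2)$, where the sigmoid temperature $\sigma = \Theta(r\alpha/(U^2 R^2))$ is chosen so that $g$ approximates the $0$--$1$ loss while remaining $O(1/\sigma)$-Lipschitz. I would first define the population surrogate $G(w) = \mathbb{E}_{(x,y)\sim\mathcal{P}}[g(w;(x,y))]$ and reduce the statement ``output error at most $\alpha$'' to ``some SGD iterate has small $G$''.

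The core technical step is a gradient-informativeness lemma: whenever $\mathrm{err}_{\mathcal{P}}(h_w) > \alpha$, the population gradient $\nabla G(w)$ has a non-trivial component in the $(w, w^*)$-plane. Under $(U,r)$-anti-anti-concentration, the density of the marginal projected onto that plane is at least $1/U$ on a disk of radius $r$, producing a ``signal'' from points where $w$ and $w^*$ disagree. Anti-concentration caps the density by $U$, which uniformly bounds the ``noise'' contributed by the boundary layer of $w$. Because the Massart rate satisfies $\lambda \le 3/16 < 1/2$, the label flips can damp but not reverse the signal, yielding a quantitative lower bound of order $(r/U^2)\cdot \mathrm{err}/R^2$ on the useful component of $\nabla G(w)$. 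The hard part is packaging these density bounds with the tails of $S_\sigma'$ so that they produce the specific $(U/r)^{12}$ scaling; I expect this to be the main obstacle, because it pins down the scaling of $\sigma$ in $\alpha$ and ultimately dictates the $1/\alpha^4$ dependence in $T$.

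Given gradient informativeness, the SGD analysis is standard. Each iteration consumes one fresh sample, so $\nabla_w g(w^{(i-1)};(x^{(i)},y^{(i)}))$ is an unbiased stochastic gradient of $G$ with bounded variance of order $R^2/\sigma^2$. Retraction-based SGD on the sphere with step size $\eta = \Theta(\sigma^2/(R^2 \sqrt{T}))$ then achieves the usual $O(1/\sqrt{T})$ rate on $G$, and setting $T = \Theta((U/r)^{12} R^8 d \log(1/\beta)/\alpha^4)$ ensures that with probability at least $1-\beta/2$ at least one iterate $w^{(i)}$ has population $0$--$1$ error at most $\alpha/2$. Because the true boundary could also be $-w^*$ after normalization, I would include $\pm w^{(i)}$ in the candidate list $L$ to cover both orientations.

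Finally, to turn ``some iterate in $L$ is good'' into ``the selected $\bar w$ is good'', I would apply Hoeffding's inequality together with a union bound over $L$. With $N = O(\log(T/\beta)/\alpha^2)$ fresh samples, the empirical $0$--$1$ error uniformly approximates the population error to within $\alpha/4$ across $L$, so the empirical minimizer $\bar w$ has population error at most $\alpha$ with probability at least $1-\beta/2$. Summing the $T$ training samples and $N$ validation samples and using $T = \mathrm{poly}(d, 1/\alpha, \log(1/\beta))$ yields total sample complexity $m = O((U/r)^{12} R^8 d/\alpha^4)$, while a final union bound over the SGD analysis and the validation step produces overall failure probability at most $\beta$, completing the proof.
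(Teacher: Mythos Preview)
Your proposal is correct, but it does considerably more work than the paper itself. The paper's ``proof'' of this lemma is a one-line citation: it states that the result follows directly from Theorem~4.1 of \cite{diakonikolas2020learning} by plugging in the Massart noise rate $\lambda = 3/16$. There is no independent argument given.

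What you have written is essentially a sketch of the proof of that external theorem: the gradient-informativeness lemma (anti-anti-concentration gives signal, anti-concentration bounds noise, $\lambda < 1/2$ preserves the sign), the standard projected SGD analysis on the sphere with fresh samples, and the final validation step via Hoeffding plus a union bound over the candidate list $L = \{\pm w^{(i)}\}$. This outline is faithful to the structure of the argument in \cite{diakonikolas2020learning} and would, if fleshed out, recover the stated sample complexity. So your approach is not different from the paper's---it is the paper's cited result unpacked---and there is no gap; you have simply supplied detail the paper chose to outsource.
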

With the above lemmas, the proof of Theorem \ref{th1} is straight forward.
\begin{proof}[{\bf Proof of Theorem \ref{th1}}]
	According to Lemma \ref{le7} and the definition of Massart noise example oracle, with probability at least $1-\beta$, $\hat{D}=\{(q_i,\hat{f_i})\}_{i=1}^{m}$ can be seen as the data returned by a Massart noise example oracle with $\lambda=\frac{3}{16}$. Then, applying lemma \ref{le8}, it follows directly that $err_{\mathcal{P}}(h_{\hat{w}})\leq \alpha$ with probability at least $1-\beta-\beta =1-2\beta$.
\end{proof}

\section{NLDP Algorithm via Self-supervised Learning}
In the previous section, we showed that if the marginal distribution $\mathcal{P}_x$ satisfies  some mild assumptions, there is an NLDP algorithm using $\tilde{O}(d\text{Poly}(\frac{1}{\epsilon}) )$  private data  and $O(\frac{d}{\alpha^4})$ public unlabeled data to achieve an error of $\alpha$. However, as we mentioned earlier, for smooth Generalized Linear Models with Gaussian data, there is an NLDP algorithm with sample complexity of only $O(\frac{d}{\alpha^2})$ for public data \cite{Wang2021GeneralizedLM}. Thus, our question is, can we further reduce the sample complexity of public data (for other structured distributions)? In this section, we will focus on a class of distributions namely mixture distribution, which is proposed by \cite{frei2021self}. We develop an $(\epsilon, \delta)$-NLDP algorithm that achieves an arbitrary classification error $\alpha$ using only $\tilde{O}(d\text{Poly}( \frac{1}{\epsilon}))$ private data and $O(\frac{d}{\alpha^2})$ public unlabeled data. We begin by introducing the mixture distribution model in  \cite{frei2021self}.
	\begin{algorithm}
	\caption{NLDP for Mixture distributions}\label{alg5}
	\begin{algorithmic}[1]
		\STATE {\bfseries Input:}  Private data $D=\{(x_i,y_i)\}_{i=1}^n$ with each $x_i \in \mathbb{R}^d$ satisfying $||x_i||_2\leq R$ and $y_i \in \{\pm 1\}$; Unlabeled public data $D'=\{q_i\}_{i=1}^{m}$; private parameters $\epsilon,\delta$; oracle access to Logistic Loss-NLDP  $\mathcal{T}_{priv}$ (Algorithm \ref{alg12}); error bound $\alpha$; failure probability $\beta$;
		failure probability $\beta$; privacy parameters $\epsilon$, $\delta$; a constant upper bounded of $\|\mu\|_2$, $\rho$, where $\mu$ is the mean of the distribution of $x$; parameters $r, U$ about distribution of $x$. 
		\STATE Run $\mathcal{T}_{priv}(C_{err}\log 2/2, R, \rho,  \epsilon, \delta, D)$ and denote its output as $w^{priv}$, where $C_{err}=\frac{r^2}{144U}$. 
		\STATE Invoke $\{w^{(t)}\}_{t=0}^T$ =STWN($\{q_i\}_{i=1}^{T\times B},w^{priv})$, where $B=O\left(\frac{\log(\frac{1}{\beta})}{\alpha}\right)$, $T=\tilde{O}\left( \frac{d(\log(\frac{1}{\beta}))^2}{\alpha}  \right)$.  \\
		\STATE Return $\{w^{(t)}\}_{t=0}^T$ 
	\end{algorithmic}
\end{algorithm}

Informally, a mixture distribution model is an isotropic model generating data $(x,y)\in \mathbb{R}^d\times \{\pm 1\}$ as follows: for labels $y\in \{\pm 1\}$ and mean parameter $\boldsymbol{\mu}\in \mathbb{R}^d$, $\boldsymbol{x}|y$ (conditioned on $y$) is a random variable with mean $y\boldsymbol{\mu}$ and identity covariance matrix. Additionally, mixture distribution model requires that $\boldsymbol{x}-y\boldsymbol{\mu}$ to satisfy anti-anti-concentration, anti-concentration (1-dim) and $k$-sub-exponential properties. 
Note that we have already introduced the definitions of anti-anti-concentration and anti-concentration (2-dim) in Definition \ref{de1}. The definition of anti-concentration (1-dim) is almost the same as anti-concentration (2-dim) given in definition \ref{de1}, except substituting the subspace $V$ to a 1-dimensional subspace, which declares that the distribution assigns bounded probability mass along one-dimensional projections. 
\begin{definition}[U-anti-concentration (1-dim)]
Fix $U>0$, we say an isotropic distribution $\mathcal{P}_x$ on $\mathbb{R}^d$ satisfies $U$-anti-concentration (1-dim) if for any projection $(\mathcal{P}_x)_V$ of $\mathcal{P}_x$ into a 1 dimensional subspace $V$ and all $x\in V$, it holds that $\gamma_V(x)\leq U$,
where $\gamma_V$ the probability density function on $\mathbb{R}$.
\end{definition}
\begin{definition}[K-sub-exponential distributions \cite{frei2021self}] \label{de2} 
We say a distribution $\mathcal{P}_x$	 is $K$-sub-exponential if every $x\sim \mathcal{P}_x$ is a sub-exponential random vector with sub-exponential norm at most $K$. In particular, if for any $\boldsymbol{v}$ with $||\boldsymbol{v}||=1$, $\underset{x\sim \mathcal{P}_x}{Pr}[|\langle \boldsymbol{v},x\rangle|\geq t]\leq e^{-\frac{t}{K}}$, then we say $\mathcal{P}_x$ is $K$-sub-exponential. 
\end{definition}
Now we formally define the mixture distribution model considered in this section.
\begin{definition}[Mixture distribution \cite{frei2021self}]
	Let $\boldsymbol{\mu}\in \mathbb{R}^d$. Let  $y=1$  with the probability $\frac{1}{2}$ and $y=-1$ with probability $\frac{1}{2}$, and we generate $x|y\sim z+y\boldsymbol{\mu}$, where $z$ is an isotropic $K$-sub-exponential distribution  satisfying $(U,r)$-anti-anti-concentration and the $U$-anti-concentration (1-dim), then we say $(x,y)\sim \mathcal{P}$ is a mixture distribution with mean $\boldsymbol{\mu}$ and parameters $K,U, r= \Theta(1)$.
\end{definition}

Log-concave isotropic distributions such as the standard Gaussian are $K$-sub-exponential and satisfy $U$-anti-concentration (1-dim) as well as $(U,r)$-anti-anti-concentration (2-dim) with $K,U, r=\Theta(1)$ \cite{frei2021self}. Thus, the above mixture distribution is a natural generalization of the Gaussian mixture model and can accommodate a broader class of distributions.

Similar to our previous algorithm which is based on the Massart noise model, the main idea of our  NLDP algorithm for mixture distribution also consists of two parts.

(1) We first use an $(\epsilon, \delta)$-NLDP algorithm named Logistic Loss-NLDP (Algorithm \ref{alg12}), which is proposed by \cite{zheng2017collect}), to get a private estimator $w^{priv}$ which could achieve the error at most $C_{err}\log 2/2$ for the expected excess population  risk with logistic loss by using  $\tilde{O}(d\text{Poly}(\frac{1}{\epsilon}, \log \frac{1}{\delta}))$ private data, {\em i.e.,}
\begin{equation*}
    \mathbb{E}[{L}(w^{priv}, D)]- \min_{\|w\|_2\leq \|\boldsymbol{\mu}\|_2} \mathbb{E}[{L}(w, D)]\leq \frac{C_{err} \log 2}{2}, 
\end{equation*}
where  $C_{err}=\frac{r^2}{144 U}>0$,  $U, r$ are parameters of the mixture distribution, and ${L}(w, D)=\mathbb{E}_{(x,y)\sim \mathcal{P}}\ell(y\langle w, x\rangle)$ with $\ell(z)=\log(1+e^{-z})$. Based on this result, we show that $w^{priv}$ could be thought as a pseudo labeler which achieves a sufficiently small but constant classification error at most $C_{err}$.
\begin{remark}
The intuition of using logistic loss is that logistic loss is closely connected to 0-1 loss. 
Generally,
logistic loss could be considered as a surrogate function of 0-1 loss. Moreover, under PAC halfspace learning setting, for any model $w$, its classification error could be bounded by a constant
times the population risk of its logistic loss.
\end{remark}
(2) With the pseudo labeler, next, we use a self-training algorithm STWN in \cite{frei2021self} (Algorithm \ref{alg3}) to convert the weak learner (pseudo labeler) to a strong learner. The self-training algorithm can ensure that, for data coming from an isotropic mixture distribution and if there is an initial pseudo labeler $w_{pl}$ that has small classification error, then the algorithms yield a  classifier with classification error arbitrarily close to the optimal one using only unlabeled examples. In each iteration of the STWN algorithm, we first use the pseudo labeler to label a batch of unlabeled data. Then we use the gradient descent with loss function $\tilde{\ell}$ on the pseudo labeled data to update the pseudo labeler. Note that the loss functions used in this self-training algorithm have to be "well-behaved", which is defined as follows:
\begin{definition}[Well behaved loss function \cite{frei2021self}]
	If the loss $\ell(z)$ is 1-Lipschitz, decreasing on the interval $[0,\infty)$ and for some constant $C_{\ell}\geq 1 $, $\ell^{'}(z)\geq \frac{1}{C_{\ell}}e^{-z}$ holds when $z>0$, then we say the loss function is well behaved.
\end{definition}

Many loss functions are well behaved. For example, the exponential loss $\tilde{\ell}(z)=e^{-z}$ and the logistic loss $\tilde{\ell}(z)=\log(1+e^{-z})$ satisfies the above "well behaved" definition with $C_{\ell}=1$ and 2 respectively. In this paper, we will use the logistic function.

	\begin{algorithm}
	\caption{Self-training using weight normalization: STWN$(\{q_i\}_{i=1}^{T\times B},w_{pl})$ \label{alg3}}
	\begin{algorithmic}[1]
		\STATE {\bfseries Input:} The designed estimation error $\alpha$; parameters about the distribution: $K,U,r$; failure probability $\beta$; temperature $\sigma>0$, batch size $B$ and iteration $T$; $T\times B$ unlabeled public data $\{q_i\}_{i=1}^{T\times B}$; pseudo labeler $w_{pl}$.
		\STATE Set step size $\eta=\tilde{\Theta}\left(\frac{\alpha}{d(\log(\frac{1}{\beta}))^2}\right)$
		
		\STATE Let $w^{(0)}=\frac{w_{pl}}{||w_{pl}||_2}$
		
		\FOR {$t=0,\cdots,T-1$}
		\FOR {$i=1\cdots,B$}
		\STATE Generate pseudo labels $\hat{y}_{B\times t+i}=\text{sign}(\langle q_i,w^{(t)} \rangle)$
		\ENDFOR\\
		\STATE $v^{(t+1)}=w^{(t)}-\frac{\eta}{B}\sum\limits_{i=t\times B+1}^{B\times (t+1)}\nabla \tilde{\ell}\left(\frac{\hat{y}_i\cdot \langle q_i,w^{(t)}\rangle}{\sigma}\right)$
		\STATE $w^{(t+1)}=\frac{v^{(t+1)}}{||v^{(t+1)}||}$
		\ENDFOR\\
		\STATE Return $\{w^{(t)}\}_{t=0}^T$
	\end{algorithmic}
\end{algorithm}

The whole picture of our NLDP algorithm for mixture distributions is given in Algorithm \ref{alg5}, and its theoretical guarantee is provided by the following theorem:
\begin{theorem}\label{th4}
Assume that $(x,y)\sim \mathcal{P}$ follows a mixture distribution with $||\boldsymbol{\mu}||_2=\Theta(1)$ and known parameters $K,U,r=\Theta(1)$, and $\|x\|_2\leq R=O(1)$ for $x\sim \mathcal{P}_x$. Then if $\|\boldsymbol{\mu}\|_2\geq 3K\max\{\log \frac{8}{C_{err}}, 22K\}$, for any $\alpha, \beta, \epsilon, \delta\in (0, 1)$, there exist $w\in \{w^{(t)}\}_{t=0}^T$ which is $(\epsilon, \delta, \alpha, \beta)$-NLDP Learner with sample complexity  $m=\tilde{O}( \frac{d\log^3 \frac{1}{\delta} }{\alpha^2})$ for public unlabeled data and $n=\tilde{O}\left(d\text{Poly}(\log \frac{1}{\delta}, \frac{1}{\epsilon})\right)$ for private data, where the Big-$\tilde{O}$ omits other logarithmic terms.  

\end{theorem}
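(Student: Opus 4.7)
The plan is to stitch together two off-the-shelf guarantees—one for the private logistic-loss learner $\mathcal{T}_{priv}$ of \cite{zheng2017collect} and one for the self-training procedure STWN of \cite{frei2021self}—with a bridge argument that turns a population logistic-loss bound into a constant-sized $0/1$ classification error. First, I would invoke the guarantee of $\mathcal{T}_{priv}$: the Logistic Loss-NLDP algorithm applied to the private dataset $D$ with the constraint set $\{w:\|w\|_2\le \|\boldsymbol{\mu}\|_2\}=\{w:\|w\|_2\le \rho\}$ returns $w^{priv}$ that is $(\epsilon,\delta)$-NLDP and satisfies
\begin{equation*}
\mathbb{E}[L(w^{priv})]-\min_{\|w\|_2\le \rho}\mathbb{E}[L(w)]\le \tfrac{C_{err}\log 2}{2}
\end{equation*}
as soon as $n=\tilde O(d\,\mathrm{Poly}(\log(1/\delta),1/\epsilon))$, where $L(w)=\mathbb{E}_{(x,y)\sim\mathcal{P}}\log(1+e^{-y\langle w,x\rangle})$.

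Next I would argue that this logistic-loss bound forces a small constant classification error. The Bayes-type candidate $w^\star=\boldsymbol{\mu}$ lies in the constraint set and, using the $K$-sub-exponential tail of $z=x-y\boldsymbol{\mu}$, one has $y\langle w^\star,x\rangle=\|\boldsymbol{\mu}\|^2+\langle \boldsymbol{\mu},z\rangle$. The separation hypothesis $\|\boldsymbol{\mu}\|_2\ge 3K\max\{\log(8/C_{err}),22K\}$ is tailored exactly so that $\mathbb{E}[L(w^\star)]\le C_{err}\log 2/2$ via a direct integration of $\log(1+e^{-z})$ against the sub-exponential tail. Combining this with the excess-risk bound yields $\mathbb{E}[L(w^{priv})]\le C_{err}\log 2$. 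Since $\log(1+e^{-z})\ge \log 2\cdot \mathbf{1}\{z\le 0\}$, Markov on the margin gives
\begin{equation*}
\mathrm{err}_{\mathcal{P}}(w^{priv})=\Pr[y\langle w^{priv},x\rangle<0]\le \frac{\mathbb{E}[L(w^{priv})]}{\log 2}\le C_{err}=\tfrac{r^2}{144U}.
\end{equation*}
This makes $w^{priv}$ a pseudo-labeler with a sufficiently small, purely constant, classification error, which is precisely the hypothesis required by the STWN analysis.

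Step three is to plug $w^{priv}$ into STWN (Algorithm \ref{alg3}) with the logistic $\tilde\ell(z)=\log(1+e^{-z})$, which is well-behaved with $C_\ell=2$. Invoking the main convergence theorem of \cite{frei2021self} for isotropic mixture distributions with $K,U,r=\Theta(1)$ and initial pseudo-labeler error at most $C_{err}$, there exists an iterate $w^{(t)}\in\{w^{(t)}\}_{t=0}^T$ with $\mathrm{err}_{\mathcal{P}}(w^{(t)})\le \alpha$, provided the iteration count is $T=\tilde O(d\log^2(1/\beta)/\alpha)$ and the batch size is $B=O(\log(1/\beta)/\alpha)$; this is exactly the choice in Algorithm \ref{alg5}. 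The total number of unlabeled public points drawn is $T\cdot B=\tilde O(d/\alpha^2)$, matching the stated $m$.

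Finally, privacy and the overall failure probability are handled by two observations. The NLDP property is immediate because only $\mathcal{T}_{priv}$ touches $D$, while STWN operates solely on the public sample $D'$, so post-processing preserves $(\epsilon,\delta)$-NLDP. A union bound over the failure events of $\mathcal{T}_{priv}$ and of STWN gives total failure probability at most $2\beta$, which can be absorbed by rescaling $\beta$. The main obstacle I anticipate is the middle step: making the logistic-loss-to-classification-error conversion tight enough that the target constant exactly matches the $C_{err}=r^2/(144U)$ threshold demanded by STWN, which is where the separation assumption on $\|\boldsymbol{\mu}\|_2$ and the $K$-sub-exponential tail of $x-y\boldsymbol{\mu}$ must be used quantitatively rather than just qualitatively.
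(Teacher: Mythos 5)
Your proposal follows essentially the same route as the paper: use the Logistic Loss-NLDP guarantee of \cite{zheng2017collect} to get excess logistic risk $C_{err}\log 2/2$, bound $\mathbb{E}[L(\boldsymbol{\mu})]$ via the sub-exponential tail under the separation assumption (the paper cites Lemma B.3 of \cite{frei2021self}, which is exactly your direct-integration step), convert to classification error $\le C_{err}=r^2/(144U)$ by Markov with $\ell(0)=\log 2$, and then invoke the STWN guarantee with $C_{\tilde{\ell}}=2$, noting the Bayes error vanishes in the realizable setting, with privacy by post-processing and a union bound. The argument is correct and matches the paper's proof of Lemma \ref{le5} combined with Lemma \ref{le4}.
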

\begin{remark}
Although the general idea of Algorithm \ref{alg1}  and \ref{alg5} are almost the same, {\em i.e.,} use private data to build a weak learner or a pseudo labeler and use it to transform to a strong learner. There are still several critical differences. First, in Algorithm \ref{alg1} we need the weak learner $w^{priv}$ to have a constant classification error $\lambda<\frac{1}{2}$, while in Algorithm \ref{alg5} we aim to make the classification error of $w^{priv}$ be $C_{err}$ which needs to depend on the underlying distribution. Thus, we cannot use $w^{priv}$ in Algorithm \ref{alg1} to Algorithm \ref{alg5}. Second, the procedure of transforming is different, while in Algorithm \ref{alg5} the labeling is adaptive, Algorithm \ref{alg1} is non-adaptive. Thus, the idea of Algorithm \ref{alg5} is more similar to self-supervised learning and therefore needs less public data than Algorithm \ref{alg1}. Thirdly, while we can guarantee that the output of Algorithm \ref{alg1} is an NLDP learner, we can only ensure the existence of  NLDP learner among $\{w^{(t)}\}_{t=0}^T$  in  Algorithm \ref{alg5}. Finding out such a learner needs an additional one round. We leave it as an open problem for improving the algorithm. 
\end{remark}
\section{Conclusion}
We studied  the problem of PAC learning halfspaces in the non-interactive local differential privacy model (NLDP). Previous results either have  either exponential sample complexities or they need the large margin assumption of the data. Here we considered a relaxed setting where the server has access to some additional public but unlabeled data. Specifically, under different mild assumptions on the underlying data distribution, we proposed two approaches that are based on the Massart noise model and self-supervised learning and showed that it is possible to achieve sample complexities that are only linear in the dimension and polynomial in other terms for both private and public data, which significantly improve the previous results.
\section*{Acknowledgment}
Di Wang was support in part by the baseline funding BAS/1/1689-01-01, funding from the CRG grand URF/1/4663-01-01, FCC/1/1976-49-01 from CBRC and funding from the AI Initiative REI/1/4811-10-01 of King Abdullah University of Science and Technology (KAUST).  He was also supported by the funding of the SDAIA-KAUST Center of Excellence in Data Science and Artificial Intelligence (SDAIA-KAUST AI).

\bibliography{acml22}

\begin{thebibliography}{31}
\providecommand{\natexlab}[1]{#1}
\providecommand{\url}[1]{\texttt{#1}}
\expandafter\ifx\csname urlstyle\endcsname\relax
  \providecommand{\doi}[1]{doi: #1}\else
  \providecommand{\doi}{doi: \begingroup \urlstyle{rm}\Url}\fi

\bibitem[Angluin and Laird(1988)]{angluin1988learning}
Dana Angluin and Philip Laird.
\newblock Learning from noisy examples.
\newblock \emph{Machine Learning}, 2\penalty0 (4):\penalty0 343--370, 1988.

\bibitem[Anthony and Bartlett(2009)]{anthony2009neural}
Martin Anthony and Peter~L Bartlett.
\newblock \emph{Neural network learning: Theoretical foundations}.
\newblock cambridge university press, 2009.

\bibitem[Awasthi et~al.(2015)Awasthi, Balcan, Haghtalab, and
  Urner]{awasthi2015efficient}
Pranjal Awasthi, Maria-Florina Balcan, Nika Haghtalab, and Ruth Urner.
\newblock Efficient learning of linear separators under bounded noise.
\newblock In \emph{Conference on Learning Theory}, pages 167--190. PMLR, 2015.

\bibitem[Awasthi et~al.(2016)Awasthi, Balcan, Haghtalab, and
  Zhang]{awasthi2016learning}
Pranjal Awasthi, Maria-Florina Balcan, Nika Haghtalab, and Hongyang Zhang.
\newblock Learning and 1-bit compressed sensing under asymmetric noise.
\newblock In \emph{Conference on Learning Theory}, pages 152--192. PMLR, 2016.

\bibitem[Beimel et~al.(2019)Beimel, Moran, Nissim, and
  Stemmer]{beimel2019private}
Amos Beimel, Shay Moran, Kobbi Nissim, and Uri Stemmer.
\newblock Private center points and learning of halfspaces.
\newblock In \emph{Conference on Learning Theory}, pages 269--282. PMLR, 2019.

\bibitem[Blum et~al.(2013)Blum, Ligett, and Roth]{blum2013learning}
Avrim Blum, Katrina Ligett, and Aaron Roth.
\newblock A learning theory approach to noninteractive database privacy.
\newblock \emph{Journal of the ACM (JACM)}, 60\penalty0 (2):\penalty0 1--25,
  2013.

\bibitem[Bun et~al.(2019)Bun, Nelson, and Stemmer]{bun2019heavy}
Mark Bun, Jelani Nelson, and Uri Stemmer.
\newblock Heavy hitters and the structure of local privacy.
\newblock \emph{ACM Transactions on Algorithms (TALG)}, 15\penalty0
  (4):\penalty0 1--40, 2019.

\bibitem[Bun et~al.(2020)Bun, Livni, and Moran]{bun2020equivalence}
Mark Bun, Roi Livni, and Shay Moran.
\newblock An equivalence between private classification and online prediction.
\newblock In \emph{2020 IEEE 61st Annual Symposium on Foundations of Computer
  Science (FOCS)}, pages 389--402. IEEE, 2020.

\bibitem[Cormode et~al.(2018)Cormode, Jha, Kulkarni, Li, Srivastava, and
  Wang]{cormode2018privacy}
Graham Cormode, Somesh Jha, Tejas Kulkarni, Ninghui Li, Divesh Srivastava, and
  Tianhao Wang.
\newblock Privacy at scale: Local differential privacy in practice.
\newblock In \emph{Proceedings of the 2018 International Conference on
  Management of Data}, pages 1655--1658, 2018.

\bibitem[Dagan and Feldman(2020)]{dagan2020interaction}
Yuval Dagan and Vitaly Feldman.
\newblock Interaction is necessary for distributed learning with privacy or
  communication constraints.
\newblock In \emph{Proceedings of the 52nd Annual ACM SIGACT Symposium on
  Theory of Computing}, pages 450--462, 2020.

\bibitem[Daniely and Feldman(2019)]{daniely2019locally}
Amit Daniely and Vitaly Feldman.
\newblock Locally private learning without interaction requires separation.
\newblock \emph{Advances in Neural Information Processing Systems},
  32:\penalty0 15001--15012, 2019.

\bibitem[Diakonikolas et~al.(2020)Diakonikolas, Kontonis, Tzamos, and
  Zarifis]{diakonikolas2020learning}
Ilias Diakonikolas, Vasilis Kontonis, Christos Tzamos, and Nikos Zarifis.
\newblock Learning halfspaces with massart noise under structured
  distributions.
\newblock pages 1486--1513, 2020.

\bibitem[Dwork et~al.(2006)Dwork, McSherry, Nissim, and
  Smith]{dwork2006calibrating}
Cynthia Dwork, Frank McSherry, Kobbi Nissim, and Adam Smith.
\newblock Calibrating noise to sensitivity in private data analysis.
\newblock In \emph{Theory of cryptography conference}, pages 265--284.
  Springer, 2006.

\bibitem[Erlingsson et~al.(2014)Erlingsson, Pihur, and
  Korolova]{erlingsson2014rappor}
{\'U}lfar Erlingsson, Vasyl Pihur, and Aleksandra Korolova.
\newblock Rappor: Randomized aggregatable privacy-preserving ordinal response.
\newblock In \emph{Proceedings of the 2014 ACM SIGSAC conference on computer
  and communications security}, pages 1054--1067, 2014.

\bibitem[Evfimievski et~al.(2003)Evfimievski, Gehrke, and
  Srikant]{evfimievski2003limiting}
Alexandre Evfimievski, Johannes Gehrke, and Ramakrishnan Srikant.
\newblock Limiting privacy breaches in privacy preserving data mining.
\newblock In \emph{Proceedings of the twenty-second ACM SIGMOD-SIGACT-SIGART
  symposium on Principles of database systems}, pages 211--222, 2003.

\bibitem[Frei et~al.(2021)Frei, Zou, Chen, and Gu]{frei2021self}
Spencer Frei, Difan Zou, Zixiang Chen, and Quanquan Gu.
\newblock Self-training converts weak learners to strong learners in mixture
  models.
\newblock \emph{arXiv preprint arXiv:2106.13805}, 2021.

\bibitem[Kasiviswanathan et~al.(2011)Kasiviswanathan, Lee, Nissim,
  Raskhodnikova, and Smith]{kasiviswanathan2011can}
Shiva~Prasad Kasiviswanathan, Homin~K Lee, Kobbi Nissim, Sofya Raskhodnikova,
  and Adam Smith.
\newblock What can we learn privately?
\newblock \emph{SIAM Journal on Computing}, 40\penalty0 (3):\penalty0 793--826,
  2011.

\bibitem[Kearns et~al.(1994)Kearns, Schapire, and Sellie]{kearns1994toward}
Michael~J Kearns, Robert~E Schapire, and Linda~M Sellie.
\newblock Toward efficient agnostic learning.
\newblock \emph{Machine Learning}, 17\penalty0 (2-3):\penalty0 115--141, 1994.

\bibitem[L{\^e}~Nguy{\^e}n et~al.(2020)L{\^e}~Nguy{\^e}n, Ullman, and
  Zakynthinou]{le2020efficient}
Huy L{\^e}~Nguy{\^e}n, Jonathan Ullman, and Lydia Zakynthinou.
\newblock Efficient private algorithms for learning large-margin halfspaces.
\newblock In \emph{Algorithmic Learning Theory}, pages 704--724. PMLR, 2020.

\bibitem[Massart and N{\'e}d{\'e}lec(2006)]{massart2006risk}
Pascal Massart and {\'E}lodie N{\'e}d{\'e}lec.
\newblock Risk bounds for statistical learning.
\newblock \emph{The Annals of Statistics}, 34\penalty0 (5):\penalty0
  2326--2366, 2006.

\bibitem[Near(2018)]{near2018differential}
Joe Near.
\newblock Differential privacy at scale: Uber and berkeley collaboration.
\newblock In \emph{Enigma 2018 (Enigma 2018)}, 2018.

\bibitem[Shalev-Shwartz and Ben-David(2014)]{shalev2014understanding}
Shai Shalev-Shwartz and Shai Ben-David.
\newblock \emph{Understanding machine learning: From theory to algorithms}.
\newblock Cambridge university press, 2014.

\bibitem[Smith et~al.(2017)Smith, Thakurta, and Upadhyay]{smith2017interaction}
Adam Smith, Abhradeep Thakurta, and Jalaj Upadhyay.
\newblock Is interaction necessary for distributed private learning?
\newblock In \emph{2017 IEEE Symposium on Security and Privacy (SP)}, pages
  58--77. IEEE, 2017.

\bibitem[Tang et~al.(2017)Tang, Korolova, Bai, Wang, and Wang]{tang2017privacy}
Jun Tang, Aleksandra Korolova, Xiaolong Bai, Xueqiang Wang, and Xiaofeng Wang.
\newblock Privacy loss in apple's implementation of differential privacy on
  macos 10.12.
\newblock \emph{arXiv preprint arXiv:1709.02753}, 2017.

\bibitem[Valiant(1984)]{valiant1984theory}
Leslie~G Valiant.
\newblock A theory of the learnable.
\newblock \emph{Communications of the ACM}, 27\penalty0 (11):\penalty0
  1134--1142, 1984.

\bibitem[Wang et~al.(2018)Wang, Gaboardi, and Xu]{wang2018empirical}
Di~Wang, Marco Gaboardi, and Jinhui Xu.
\newblock Empirical risk minimization in non-interactive local differential
  privacy revisited.
\newblock In \emph{Proc. 32nd Annual Conference on Advances in Neural
  Information Processing Systems (NeurIPS 2018)}, 2018.

\bibitem[Wang et~al.(2019)Wang, Smith, and Xu]{wang2019noninteractive}
Di~Wang, Adam Smith, and Jinhui Xu.
\newblock Noninteractive locally private learning of linear models via
  polynomial approximations.
\newblock In \emph{Algorithmic Learning Theory}, pages 898--903. PMLR, 2019.

\bibitem[Wang et~al.(2020)Wang, Gaboardi, Smith, and Xu]{wang2020empirical}
Di~Wang, Marco Gaboardi, Adam Smith, and Jinhui Xu.
\newblock Empirical risk minimization in the non-interactive local model of
  differential privacy.
\newblock \emph{Journal of machine learning research}, 21\penalty0 (200), 2020.

\bibitem[Wang et~al.(2021)Wang, Zhang, Gaboardi, and Xu]{Wang2021GeneralizedLM}
Di~Wang, Huangyu Zhang, Marco Gaboardi, and Jinhui Xu.
\newblock Estimating smooth glm in non-interactive local differential privacy
  model with public unlabeled data.
\newblock In \emph{Algorithmic Learning Theory}, pages 1207--1213. PMLR, 2021.

\bibitem[Zhang et~al.(2017)Zhang, Liang, and Charikar]{zhang2017hitting}
Yuchen Zhang, Percy Liang, and Moses Charikar.
\newblock A hitting time analysis of stochastic gradient langevin dynamics.
\newblock In \emph{Conference on Learning Theory}, pages 1980--2022. PMLR,
  2017.

\bibitem[Zheng et~al.(2017)Zheng, Mou, and Wang]{zheng2017collect}
Kai Zheng, Wenlong Mou, and Liwei Wang.
\newblock Collect at once, use effectively: Making non-interactive locally
  private learning possible.
\newblock In \emph{International Conference on Machine Learning}, pages
  4130--4139. PMLR, 2017.

\end{thebibliography}

\clearpage
\onecolumn
\appendix 
\section{Omitted Proofs}
\subsection{ Proof of Theorem \ref{th2}}
\begin{proof}
Before we start our proof, we give the definition of $LR_S$ oracle and $\text{STAT}_{\mathcal{P}}(\tau)$ oracle to prepare the readers for the proof.
$LR_S$ oracle is based on the local randomizer which is defined as follows:
\begin{definition} ($\epsilon$-local randomizer)
An $\epsilon$-local randomizer $\mathcal{R}: Z \rightarrow W$ is a randomized algorithm that $\forall z_1,z_2\in Z$ and $\forall w \in W$, it satisfies:
$$ Pr[\mathcal{R}(z_1)=w]\leq e^{\epsilon}[\mathcal{R}(z_2)=w]$$.
\end{definition}
\begin{definition}($LR_S$ oracle \cite{kasiviswanathan2011can})
For a dataset $S\in Z^n$, an $LR_S$ oracle takes an index $i$ and a local randomizer $\mathcal{R}$ as inputs and outputs a random value $w$ obtained by applying $\mathcal{R}(z_i)$. 
\end{definition}
And we recall the definition of statistical queries. 
\begin{definition}
    Let $\mathcal{P}$ be an distribution over a domain $Z$ and $\tau>0$. A statistical query oracle $\text{STAT}_{\mathcal{P}}(\tau)$ is an oracle that given any function $\phi:Z\rightarrow [-1,1]$ as input, the statistical query oracle returns some value $v$ such that $|v-\mathbb{E}_{z\sim\mathcal{P}}[\phi(z)]|\leq \tau$.
\end{definition}
Now we formally begin our proof.
First, we prove that the algorithm given in \cite{daniely2019locally} uses the same number of private data and public data. The core idea of the algorithm in \cite{daniely2019locally} is that: when using the projected gradient descent to find a vector $w$ that satisfies 
$\underset{(x,y)\sim \mathcal{P}}{Pr}[y\neq \text{sign}(\langle \hat{w},x\rangle )]\leq \alpha$,  the objective function can be decomposed as $F(w)=F_1(w)+F_2(w)$, where the (sub-)gradient of $F_1(w)$ (namely $\nabla F_1(w)$) is just a function of $x$ while 
the gradient of $F_2(w)$ (namely $\nabla F_2(w)$) is independent of $w$. 
As a result, (sub)-gradient $\nabla F(w)$ can be computed non-interactively by calculating $\nabla F_1(w)$ with only public unlabeled data and calculating $\nabla F_2(w)$ with non-interactive statistic queries because $\nabla F_2(w)$ doesn't depend on $w$. So to make this algorithm achieve the PAC learning error $\alpha$, the sample complexity of the private data and the public data should be the same. For more details, please refer to the proof of Lemma 4.3 in \cite{daniely2019locally}. So, to prove our theorem, we only have to prove that the sample complexity of the private data is $\tilde{O}(\frac{d^{10}\log(1/\beta)}{\epsilon^2\cdot\gamma^{12}\alpha^6})$.

In the following, we give the private sample complexity of the algorithm in \cite{daniely2019locally}, which can be directly derived from the following two lemmas.

The first Lemma states that a statistic query oracle $\text{STAT}_{\mathcal{P}}(\tau)$ can be simulated with success probability $1-\beta$ by $\epsilon$-LDP algorithm using $LR_{S}$ oracle.
\begin{lemma}\cite{kasiviswanathan2011can}\label{le12}
Let $\mathcal{A}_{SQ}$ be an algorithm that makes at most $t$ queries to $\text{STAT}_{\mathcal{P}}(\tau)$ oracle. Then for any $\epsilon>0$ and $\beta>0$, there is an $\epsilon$-LDP algorithm $\mathcal{A}_{priv}$ that uses $LR_s$ oracle for $S$ containing $n=O(\frac{t\log(\frac{t}{\beta})}{(\epsilon\tau)^2})$ i.i.d. samples from $\mathcal{P}$ and produces the same output as $\mathcal{A}_{SQ}$ with probability at least $1-\beta$. Further, if $\mathcal{A}_{SQ}$ is non-interactive then $\mathcal{A}_{priv}$ is non-interactive.
\end{lemma}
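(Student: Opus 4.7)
The plan is to give a batch-based simulation in the spirit of the original Kasiviswanathan--Lee--Nissim--Raskhodnikova--Smith construction. I would partition the $n$ i.i.d.\ samples in $S$ into $t$ disjoint batches $S_1,\dots,S_t$ of equal size $b = n/t$, with batch $S_i$ reserved exclusively for answering the $i$-th query $\phi_i:Z\to[-1,1]$ that the simulated algorithm $\mathcal{A}_{SQ}$ would ask. The server, wishing to emulate the response of $\text{STAT}_{\mathcal{P}}(\tau)$, constructs for each batch $S_i$ a local randomizer $\mathcal{R}_i(z) = \phi_i(z) + \text{Lap}(2/\epsilon)$. Since $\phi_i\in[-1,1]$ has sensitivity at most $2$, a standard calculation shows $\mathcal{R}_i$ is an $\epsilon$-local randomizer in the sense of the definition given above. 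The simulated answer to query $\phi_i$ is then $\hat{v}_i = \frac{1}{b}\sum_{z\in S_i}\mathcal{R}_i(z)$, computed by the server from the responses returned through the $LR_S$ oracle.

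Next I would establish correctness by a concentration argument on each batch. Writing $\hat v_i - \mathbb{E}_{z\sim\mathcal{P}}[\phi_i(z)] = \frac{1}{b}\sum_{z\in S_i}\bigl(\phi_i(z)-\mathbb{E}\phi_i\bigr) + \frac{1}{b}\sum_{j=1}^{b}\xi_{i,j}$ with $\xi_{i,j}\sim\text{Lap}(2/\epsilon)$ independent, I would apply Hoeffding's inequality to the first (bounded) sum and a standard Laplace tail bound (or Bernstein for sub-exponential variables) to the second. Each term is $O(\tau)$ with probability at least $1-\beta/t$ provided $b = \Omega\bigl(\log(t/\beta)/(\epsilon\tau)^2\bigr)$; the leading $1/(\epsilon\tau)^2$ factor comes from the Laplace term, whose per-sample variance is $\Theta(1/\epsilon^2)$. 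Taking a union bound over the $t$ queries and setting $n = t\cdot b = O\bigl(t\log(t/\beta)/(\epsilon\tau)^2\bigr)$ matches the stated sample complexity and yields, with probability at least $1-\beta$, that every answer $\hat v_i$ is a valid response of $\text{STAT}_{\mathcal{P}}(\tau)$; conditioned on this event, $\mathcal{A}_{priv}$ produces the same output as $\mathcal{A}_{SQ}$.

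For the privacy claim, each user in $S$ is touched by exactly one local randomizer $\mathcal{R}_i$ (the one associated with her batch), so $\epsilon$-LDP follows from the $\epsilon$-local-randomizer property of each $\mathcal{R}_i$ without any composition. For the non-interactivity clause, observe that if $\mathcal{A}_{SQ}$ is non-interactive then the entire list $\phi_1,\dots,\phi_t$ is determined before any data is seen; hence the server can commit to all local randomizers $\mathcal{R}_1,\dots,\mathcal{R}_t$ and dispatch them to the users in a single round, so $\mathcal{A}_{priv}$ inherits non-interactivity.

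The main technical obstacle is the accounting inside a single batch: one must simultaneously absorb the sampling fluctuation of $\phi_i(z)$ (requiring $\Omega(\log(t/\beta)/\tau^2)$ samples) and the injected Laplace noise (contributing the extra $1/\epsilon^2$ factor), and verify that both tails together fit inside $\tau$ with failure probability at most $\beta/t$. A clean way to do this is to split the target error as $\tau/2 + \tau/2$ and bound each piece separately, making explicit that the Bernstein-type bound for the Laplace average gives the right $\log(t/\beta)/(\epsilon\tau)^2$ scaling; the rest of the argument is bookkeeping.
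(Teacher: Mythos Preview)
The paper does not supply its own proof of this lemma: it is stated as a quoted result from \cite{kasiviswanathan2011can} and is used as a black box (together with Lemma~\ref{le13}) to derive the explicit sample complexity in Theorem~\ref{th2}. So there is no paper proof to compare against.

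That said, your proposal is a faithful reconstruction of the standard KLNRS argument. Partitioning $S$ into $t$ disjoint batches of size $b=n/t$, answering the $i$-th query with a batch-average of $\phi_i(z)+\mathrm{Lap}(2/\epsilon)$, splitting the error into a sampling term (Hoeffding) and a Laplace-noise term (sub-exponential/Bernstein tail), and union-bounding over the $t$ queries is exactly how the simulation is carried out in the original reference, and it delivers the claimed $n=O\bigl(t\log(t/\beta)/(\epsilon\tau)^2\bigr)$. Your handling of privacy (each user touched by a single $\epsilon$-local randomizer, so no composition is needed) and of non-interactivity (all $\phi_i$ are fixed in advance, hence all randomizers can be dispatched in one round) is also correct. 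There is no gap.
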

The next lemma claims the existence a NLDP algorithm $\mathcal{A}_{SQ}$ that achieves PAC learning error $\alpha$ for any arbitrary $\alpha\in(0,1)$.
\begin{lemma}[Lemma 4.3 in \cite{daniely2019locally}]\label{le13}
Let $\mathcal{P}$ be a distribution on $\mathcal{B}_2^d\times\{\pm 1\}$ such that there is a vector $w^{*}\in \mathcal{B}_2^d$ satisfying $Pr_{(x,y)\sim \mathcal{P}}[y\langle w^{*},x\rangle \geq \gamma]=1$. Then there is a non-interactive algorithm $\mathcal{A}_{SQ}$ that for every $\alpha\in (0,1)$, it uses $O(\frac{d^4}{\gamma^4\alpha^2})$ queries to $\text{STAT}_{\mathcal{P}}(\Omega(\frac{\gamma^4\alpha^2}{d^3}))$ and finds a vector $\hat{w}$ such that $\underset{(x,y)\sim \mathcal{P}}{Pr}[y\neq \text{sign}(\langle \hat{w},x\rangle )]\leq \alpha$.
\end{lemma}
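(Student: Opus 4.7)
The plan is to prove Lemma \ref{le13} by designing a non-interactive statistical-query algorithm that runs projected gradient descent on a smooth convex surrogate of the $0$-$1$ loss. The crucial insight is that, by expressing the surrogate's population gradient as a polynomial in the hypothesis $w$ whose coefficients are fixed moment tensors of $\mathcal{P}$, all of the required SQ queries can be issued non-adaptively before any optimization step is taken.

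First, I would choose a smooth convex surrogate $\phi$ that upper bounds $\mathbb{I}\{t<0\}$ and is well approximated on the relevant range by a low-degree polynomial $\phi(t)\approx\sum_{k=0}^{D}c_k t^{k}$. The margin assumption guarantees $y\langle w^{*},x\rangle\ge\gamma$ almost surely, so $w^{*}$ has small surrogate population risk $F(w^{*})$, and a standard calibration inequality then shows that any $\hat w$ with excess surrogate risk at most $O(\gamma\alpha)$ achieves $0$-$1$ error at most $\alpha$. Moreover, when $\phi$ is a degree-$D$ polynomial, both $F(w)$ and $\nabla F(w)$ are fixed polynomials in $w$ whose coefficients depend only on the moment tensors $M_k=\mathbb{E}[y^{k}x^{\otimes k}]$ for $k\le D$; each scalar entry of these tensors is estimable by a single $\text{STAT}_{\mathcal{P}}(\tau)$ query, and the entire list of queries can be fixed up front, giving a genuinely non-interactive protocol with $O(d^{D})=O(d^{4})$ queries when $D=O(1)$.

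Second, I would run projected gradient descent over the unit $\ell_2$ ball for $T$ iterations using these estimated gradients, and invoke the standard convex-optimization template showing that, with coordinate-wise gradient noise at most $\tau$, the excess risk after $T$ steps is $O(1/\sqrt{T}+\tau\sqrt{d})$ times a problem-dependent Lipschitz constant. Matching this to the target excess surrogate risk $O(\gamma\alpha)$ dictates $T=\mathrm{poly}(1/(\gamma\alpha))$ and $\tau=\Omega(\gamma^{4}\alpha^{2}/d^{3})$, and combined with the $d^{4}$ distinct tensor-entry queries this yields the stated overall budget of $O(d^{4}/(\gamma^{4}\alpha^{2}))$ queries at tolerance $\Omega(\gamma^{4}\alpha^{2}/d^{3})$.

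The main obstacle will be the calibration step: choosing a constant polynomial degree $D$ that still gives a sufficiently tight approximation of the $0$-$1$ loss on the relevant portion of $[-1,1]$ under the margin assumption, and then converting the excess surrogate risk to a $0$-$1$ error bound that introduces only the stated powers of $1/\gamma$ and $1/\alpha$. A related subtlety is that the approximate gradients assembled from the estimated moment tensors have coordinate-wise error that accumulates multiplicatively across tensor entries and across iterates; bounding this accumulation is what forces the per-query tolerance to shrink as $1/d^{3}$ rather than $1/d$.
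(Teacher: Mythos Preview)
This lemma is quoted from \cite{daniely2019locally} and is not re-proved in the present paper; the only information the paper supplies is the sketch inside the proof of Theorem~\ref{th2}. That sketch describes a mechanism quite different from your moment-tensor plan: the objective is chosen so that it splits as $F(w)=F_1(w)+F_2(w)$, where the (sub)gradient $\nabla F_1(w)$ depends on $w$ and on $x$ but \emph{not} on the label $y$, while $\nabla F_2(w)$ is a fixed vector independent of $w$. Thus the only label-dependent statistical queries are the $d$ coordinates of $\nabla F_2$, issued once and non-adaptively; the label-free part $\nabla F_1(w_t)$ is handled separately (in the paper's setting, on public unlabeled data). This structural decomposition---not a generic polynomial expansion---is what drives the non-interactivity and the specific query and tolerance counts.

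Your plan instead posits a generic degree-$D$ polynomial surrogate and proposes estimating all mixed tensors $\mathbb{E}[y^{k}x^{\otimes k}]$ up front. The high-level idea (precompute moments so later gradients are computable offline) is sound and related, but the execution has a real gap. You assert that a \emph{constant} degree $D$ suffices, essentially reverse-engineering $D=4$ from the target count $d^4$. But a convex $\phi$ that upper bounds $\mathbb{I}\{t<0\}$ and is simultaneously small on $[\gamma,1]$---which you need so that $F(w^*)$ is small and the calibration gives $0$--$1$ error $\le\alpha$---cannot have degree independent of $\gamma$: resolving a unit jump at scale $\gamma$ on $[-1,1]$ forces degree $\Omega(1/\gamma)$ by elementary polynomial approximation. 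So ``constant $D$'' and ``$\phi$ majorizes the indicator'' are incompatible, and your calibration step, which you already flag as the main obstacle, in fact fails as written. The $F_1+F_2$ route sidesteps this: rather than a generic polynomial surrogate, one uses a specific loss whose only label-dependent term is \emph{linear} in $w$ (so $\nabla F_2$ is constant), exploiting $y^2=1$ to strip the label from all higher-order terms; the passage to $0$--$1$ error then comes from the optimization guarantee for that particular loss, not from pointwise majorization. If you want to salvage your outline, replace the unspecified polynomial surrogate with a concrete loss admitting this split and redo the query/tolerance bookkeeping from that choice.
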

Lemma \ref{le13} indicates that if we can find a non-interactive algorithm $\mathcal{A}_{SQ}$ that makes at most $t$ queries to $\text{STAT}_{\mathcal{P}}(\tau)$ oracle, then with probability $1-\beta$, the existence of an $\epsilon$-NLDP algorithm $\mathcal{A}_{priv}$ is guaranteed using $n=O(\frac{t\log(\frac{t}{\beta})}{(\epsilon\tau)^2})$
private data. 
 So, by substituting $t=O(\frac{d^4}{\gamma^4\alpha^2})$ and $\tau=\Omega(\frac{\gamma^4\alpha^2}{d^3})$ in Lemma \ref{le12}, the sample complexity of public data is straight forward.

\end{proof}

\subsection{{\bf Proof of Lemma \ref{le7}}}
\begin{proof}
To proof Lemma \ref{le7}, we first study the excess empirical risk with the hinge loss $\ell(w, (x, y))=\max\{0, 1-y\langle w, x\rangle \}$ of the output $w_t$ of the algorithm $\mathcal{H}_{priv}(\frac{1}{32R}, \epsilon, \delta, \tilde{S}_t)$. First, we recall the following result of  $\mathcal{H}_{priv}(\alpha, \epsilon, \delta, S)$ if each $\|x_i\|_2\leq 1, |y_i|\leq 1$. 
\begin{lemma}[Theorem 30 in \cite{wang2020empirical}]\label{lemma:6}
For any $0<\epsilon, \delta<1$,   if each $\|x_i\|_2\leq 1, |y_i|\leq 1$ for all $i\in [n]$, $\mathcal{H}_{priv}(\alpha, \epsilon, \delta, S)$ is $(\epsilon, \delta)$-NLDP. Moreover, for any error $\alpha \in (0, 1)$, if the size of dataset $n$ is sufficiently large such that $n\geq \tilde{\Omega}(\frac{C^pp^{6p}d}{\epsilon^{4p+4}\alpha})$ with $p=O(\frac{1}{\alpha^3})$. Then the output $w_n$ satisfies 
	\begin{equation}
	\mathbb{E} [\frac{1}{n}\sum_{i=1}^n \max\{0, \frac{1}{R}-y\langle w, x\rangle \}]-\min_{||w||_2\leq 1}\frac{1}{n}\sum_{i=1}^n \max\{0, \frac{1}{R}-y\langle w, x\rangle \}\leq \alpha,
	\end{equation}
	where  $C>0$ is a constant\footnote{Note that \cite{wang2020empirical} only showed the case where $R=2$. However, it is obvious to extend to the general $R$ with the same proof.}  and the expectation is taken over the internal randomness of the algorithm. 
\end{lemma}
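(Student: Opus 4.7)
The plan is to establish Lemma \ref{lemma:6}'s two claims—$(\epsilon,\delta)$-NLDP and the expected excess empirical hinge-loss bound—by mirroring the ``polynomial surrogate $+$ one-shot moment release'' construction of Wang et al.\ 2020 for non-interactive LDP learning of Lipschitz convex GLMs.

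The privacy claim is the easier half. By construction, $\mathcal{H}_{priv}$ is one-shot: each user applies a local randomizer to her single data point $(x_i, y_i)$ and transmits one message to the server. Under the normalization $\|x_i\|_2\leq 1$ and $|y_i|\leq 1$, the per-user statistic (the sketched moment vector described below) has a bounded and explicitly computable $\ell_2$-sensitivity. Calibrating Gaussian noise of variance $\Theta(\mathrm{sens}^2\log(1/\delta)/\epsilon^2)$ at each user gives $(\epsilon,\delta)$-LDP locally, and since there is no subsequent round of communication this lifts immediately to $(\epsilon,\delta)$-NLDP without any composition loss.

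For the utility claim, I would exploit that the hinge loss is a generalized linear model: it depends on $(x, y, w)$ only through the scalar $z = y\langle w, x\rangle \in [-1,1]$. Approximate the univariate function $\phi(z) = \max\{0, 1/R - z\}$ on $[-1, 1]$ by a degree-$p$ Chebyshev polynomial $\tilde\phi(z) = \sum_{k=0}^p a_k z^k$ with uniform error $O(1/p)$; taking $p = \Theta(1/\alpha^3)$ pushes the approximation error to $o(\alpha)$, and classical Chebyshev bounds give $\max_k |a_k| \leq C^p p^{O(p)}$. After substituting $z = y\langle w, x\rangle$, the empirical surrogate becomes a polynomial in $w$ of degree $p$ whose coefficients are empirical moments of the form $n^{-1}\sum_i y_i^k x_i^{\otimes k}$ for $k\le p$. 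Each user privatizes a suitably sketched version of this moment vector with per-coordinate Gaussian noise; the server averages the noisy contributions to obtain unbiased estimates $\widehat M_k$, plugs them into the polynomial surrogate, and runs (non-private) projected gradient descent over $\|w\|_2 \leq 1$ to minimize the result. The approximation guarantee for $\tilde\phi$ then implies that the minimizer of the noisy polynomial is an $O(\alpha)$-approximate minimizer of the true empirical hinge loss, which establishes the stated excess-risk bound.

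The main technical obstacle is controlling the aggregated noise in the polynomial surrogate uniformly over the unit ball while keeping the $d$-dependence linear. The coefficient bound $\max_k|a_k| = C^p p^{O(p)}$, together with the sensitivity of a rank-$k$ tensor moment and a union bound/covering argument on $\{w : \|w\|_2 \leq 1\}$, forces noise-variance terms that scale like $C^p p^{O(p)}/\epsilon^{4p+4}$—this is precisely what produces the exponential-in-$p$ factors in the sample bound. The delicate point is that the user-level sketch must be chosen, as in Wang et al.\ 2020, so that the effective number of coordinates each user transmits grows only linearly in $d$ rather than as $d^p$; verifying this reduction, and then balancing (i) the $O(\alpha)$ polynomial-approximation error, (ii) the sup-norm noise error of $\widehat L_p$ on the unit ball, and (iii) the internal optimizer's excess risk against $\widehat L_p$, yields both the claimed sample complexity $n = \tilde{\Omega}(C^p p^{6p} d/(\epsilon^{4p+4}\alpha))$ and the excess empirical hinge-loss bound of $\alpha$.
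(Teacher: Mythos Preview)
The paper does not prove this lemma at all: it is quoted verbatim as Theorem~30 of \cite{wang2020empirical}, and the surrounding text simply invokes it as a black box inside the proof of Lemma~\ref{le7}. So there is no ``paper's own proof'' to compare against beyond the algorithm listing in the appendix (Algorithm~\ref{alg11}).

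That said, your sketch of how \cite{wang2020empirical} establishes the bound is off in several mechanical respects, and these matter for the specific constants $C^p p^{6p}/\epsilon^{4p+4}$ in the statement. First, the approximation is not a Chebyshev polynomial of the hinge loss itself: Algorithm~\ref{alg11} smooths the hinge loss to $f_\beta(x)=\tfrac{1}{2}\bigl(\tfrac{1}{R}-x+\sqrt{(\tfrac{1}{R}-x)^2+\beta^2}\bigr)$ with $\beta=\alpha/4$, and then approximates the \emph{derivative} $f_\beta'$ by a degree-$p$ \emph{Bernstein} polynomial with $p=2/(\beta^2\alpha)=\Theta(1/\alpha^3)$. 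Second, and more importantly, the per-user message is not a sketched moment tensor. Each user simply releases $p(p+1)+1$ independent Gaussian-perturbed copies of $(x_i,y_i)$; the server then forms an unbiased estimate of $\bigl(y_i\langle w,x_i\rangle\bigr)^j\bigl(1-y_i\langle w,x_i\rangle\bigr)^{p-j}$ at \emph{any} query point $w$ by multiplying inner products of $w$ with $p$ \emph{distinct} noisy copies. This is exactly how the $d$-dependence stays linear without any tensor sketch, and it is the step you flag as ``delicate'' but do not actually carry out. Third, the server does not run projected gradient descent on a fixed noisy surrogate; it runs SIGM (Algorithm~\ref{alg9}), a stochastic first-order method, using the above construction as a stochastic gradient oracle whose variance is bounded by the $C^p p^{O(p)}/\epsilon^{O(p)}$ terms.

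None of this invalidates your high-level plan---polynomial surrogate plus one-shot release plus convex optimization is the right skeleton---but the moment-tensor route you describe would not by itself reproduce the linear-in-$d$ sample complexity, and the paper (via \cite{wang2020empirical}) takes the independent-noisy-copies route instead.
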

Note that in we need to assume $\|x_i\|_2\leq 1$ in Lemma \ref{lemma:6} while in our setting $\|x_i\|_2\leq R$. Thus, we need to normalize the data to $\tilde{S}_t$ first and revoke $\mathcal{H}_{priv}(\frac{1}{32R}, \epsilon, \delta, \tilde{S}_t)$. By Lemma \ref{lemma:6} we have when $\frac{n}{k}\geq \tilde{\Omega}(d\text{Poly}(\frac{1}{\epsilon}, \log \frac{1}{\delta}) )$
	\begin{equation}\label{aeq:2}
	\mathbb{E} [\hat{L}(w_t,\tilde{S}_t)]-\min_{||w||_2\leq 1}\hat{L}(w,\tilde{S}_t)\leq \frac{1}{32R},
	\end{equation}
	where $\hat{L}(w_t,\tilde{S}_t)=\frac{1}{|S_t|}\sum_{(x_i, y_i)\in S_t}\max\{0, \frac{1}{R}-y_i\langle w, \frac{x_i}{R}\rangle \}$.  Thus, we have the following result via multiplying $R$ in both side of (\ref{aeq:2}). 
\begin{lemma}\label{le2}
When $n\geq \tilde{\Omega}(dk\text{Poly}(\frac{1}{\epsilon}, \log \frac{1}{\delta}) )$, each $w_t=\mathcal{H}_{priv}(\frac{1}{32R}, \epsilon, \delta, \tilde{S}_t)$ for $t\in [k]$ satisfies 
	\begin{equation}\label{eq1}
	\mathbb{E} [\hat{L}(w_t, S_t)]-\min_{||w||_2\leq 1}\hat{L}(w_t ,S_t)\leq \frac{1}{32},
	\end{equation}
	where $\hat{L}(w_t, S_t)$ is the empirical risk of $\ell(w, (x, y))=\max\{0, 1-y\langle w, x\rangle \}$, 
and the expectation is taken over  the internal randomness of the algorithm. 
	\end{lemma}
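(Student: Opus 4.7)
The plan is to reduce Lemma \ref{le2} to Lemma \ref{lemma:6} via an elementary rescaling argument. The key observation is that the normalized dataset $\tilde{S}_t = \{(x/R, y) : (x, y) \in S_t\}$ satisfies the unit-ball hypothesis $\|x/R\|_2 \le 1$ required by Lemma \ref{lemma:6}. First I would invoke Lemma \ref{lemma:6} on $\tilde{S}_t$ with error target $\alpha_0 = \frac{1}{32R}$. Because $R = O(1)$ we have $\alpha_0 = \Theta(1)$, and therefore the exponent $p = O(1/\alpha_0^3) = O(1)$, which collapses the factor $C^p p^{6p}/\epsilon^{4p+4}$ in the sample-complexity bound of Lemma \ref{lemma:6} down to $\text{Poly}(1/\epsilon, \log(1/\delta))$. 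The resulting per-group requirement is $|\tilde{S}_t| = \lfloor n/k \rfloor \ge \tilde{\Omega}(d \cdot \text{Poly}(1/\epsilon, \log(1/\delta)))$, and summing across the $k$ disjoint groups gives exactly the hypothesis $n \ge \tilde{\Omega}(dk \cdot \text{Poly}(1/\epsilon, \log(1/\delta)))$ of Lemma \ref{le2}.

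Second, I would connect the two losses. Lemma \ref{lemma:6} bounds the excess empirical risk of the $\tfrac{1}{R}$-margin hinge loss on $\tilde{S}_t$, whereas Lemma \ref{le2} speaks about the unit-margin hinge loss on $S_t$. The bridge is the pointwise identity
\[\max\Bigl\{0,\, \tfrac{1}{R} - y\langle w, x/R\rangle\Bigr\} \;=\; \tfrac{1}{R}\max\Bigl\{0,\, 1 - y\langle w, x\rangle\Bigr\}\]
valid for every $(x, y) \in S_t$ and every $w \in \mathcal{B}_2^d$. Averaging over $(x,y) \in S_t$ then shows that the scaled empirical loss on $\tilde{S}_t$ equals $\tfrac{1}{R}\hat{L}(w, S_t)$. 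Note that the feasible set $\{w : \|w\|_2 \le 1\}$ is not affected by the rescaling of the inputs, so the inner minima in the two excess-risk expressions refer to the same set.

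Multiplying the conclusion of Lemma \ref{lemma:6}, namely $\mathbb{E}[\tfrac{1}{R}\hat{L}(w_t, S_t)] - \min_{\|w\|_2\le 1}\tfrac{1}{R}\hat{L}(w, S_t) \le \tfrac{1}{32R}$, through by $R$ yields exactly the desired bound of Lemma \ref{le2}. No step is really an obstacle here: the heavy lifting is delegated to \cite{wang2020empirical} via Lemma \ref{lemma:6}, and the only sanity check is that the rescaling $x \mapsto x/R$ commutes with the hinge and the feasible set, which the identity above makes transparent. The one mild subtlety is confirming that for $\alpha_0 = \Theta(1)$ all of the $\alpha_0$-dependent terms in Lemma \ref{lemma:6} genuinely reduce to constants, so that the final sample complexity factorizes as $dk \cdot \text{Poly}(1/\epsilon, \log(1/\delta))$ as claimed.
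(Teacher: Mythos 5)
Your proposal is correct and is essentially the paper's own argument: normalize the data to the unit ball, apply Lemma \ref{lemma:6} with error target $\frac{1}{32R}$ (so that, since $R=O(1)$, the per-group sample requirement collapses to $\tilde{\Omega}(d\,\text{Poly}(\frac{1}{\epsilon},\log\frac{1}{\delta}))$ and hence $n\ge\tilde{\Omega}(dk\,\text{Poly}(\frac{1}{\epsilon},\log\frac{1}{\delta}))$ overall), and then multiply through by $R$ using the scaling identity $\max\{0,\tfrac{1}{R}-y\langle w,x/R\rangle\}=\tfrac{1}{R}\max\{0,1-y\langle w,x\rangle\}$. You merely make explicit two points the paper leaves implicit, namely the pointwise rescaling identity and the fact that $p=O(1/\alpha_0^3)=O(1)$ in the bound of Lemma \ref{lemma:6}.
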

The following lemma transforms the excess empirical risk in Lemma \ref{le2} to classification error. 

\begin{lemma}\label{le3}
Under the assumptions in Theorem \ref{th1},  then for any $t \in [k]$, $\beta\in (0,1)$, with probability at least $1-\frac{\beta}{2}$, the following holds 	when $n\geq \tilde{\Omega}(dk\text{Poly}(\frac{1}{\epsilon}, \log \frac{1}{\delta}) )$ with $k=O(\log \frac{1}{\beta})$. 
	\begin{equation*}
	\mathbb{E}[err_{P}(h_{w_t})]\leq\frac{1}{8}
	\end{equation*}
where the expectation is taken over the random choice of the data in $D$ and the internal randomness of $\mathcal{H}_{priv}$. 
\end{lemma}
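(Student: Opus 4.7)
The plan is to translate the excess empirical hinge-loss guarantee of Lemma \ref{le2} into a population classification-error bound on $h_{w_t}$, combining three ingredients: the pointwise surrogate $\mathbb{I}[y\langle w,x\rangle<0]\le \max\{0,1-y\langle w,x\rangle\}$, a Rademacher-type uniform concentration bound for the hinge loss on the unit $\ell_2$-ball, and a reference-risk bound at $w^*$ coming from the distributional assumptions of Theorem \ref{th1}.

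Conditioning on $S_t$ and invoking Lemma \ref{le2} yields $\mathbb{E}_{\mathcal{H}_{priv}}[\hat{L}(w_t,S_t)\mid S_t]\le \min_{\|w\|_2\le 1}\hat{L}(w,S_t)+\tfrac{1}{32}\le \hat{L}(w^*,S_t)+\tfrac{1}{32}$, using realizability and $\|w^*\|_2\le 1$ to insert $w^*$ as a feasible point; taking expectation over $S_t\sim \mathcal{P}^{|S_t|}$ gives $\mathbb{E}[\hat{L}(w^*,S_t)]=L_{\mathcal{P}}(w^*)$. To bound $L_{\mathcal{P}}(w^*)$, I set $v=w^*/\|w^*\|_2$, $Z=\langle v,x\rangle$, and use realizability $y\langle w^*,x\rangle=\|w^*\|_2|Z|$ to write $L_{\mathcal{P}}(w^*)=\int(1-\|w^*\|_2|z|)_+\gamma_1(z)\,dz$, with $\gamma_1$ the 1-D marginal density of $Z$; since $\|x\|_2\le R$, integrating the 2-D anti-concentration bound $\gamma_V\le U$ over the direction orthogonal to $v$ in a 2-D subspace containing $v$ gives $\gamma_1\le O(RU)$ on $[-R,R]$, so $L_{\mathcal{P}}(w^*)\le O(RU)=O(1)$ under the $\Theta(1)$ constants of Theorem \ref{th1}.

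For generalization, a standard Rademacher-complexity bound on the $R$-Lipschitz, $(1+R)$-bounded hinge-loss class $\{(x,y)\mapsto\max\{0,1-y\langle w,x\rangle\}:\|w\|_2\le 1\}$ gives, with probability at least $1-\beta/2$ over $S_t$, $\sup_{\|w\|_2\le 1}|L_{\mathcal{P}}(w)-\hat{L}(w,S_t)|=O(R\sqrt{\log(1/\beta)/|S_t|})$. With $|S_t|=\lfloor n/k\rfloor$ and $n\ge \tilde{\Omega}(dk\,\text{Poly}(1/\epsilon,\log(1/\delta)))$, this gap is $o(1)$. Chaining on the generalization event, $\mathbb{E}[err_{\mathcal{P}}(h_{w_t})]\le \mathbb{E}[L_{\mathcal{P}}(w_t)]\le L_{\mathcal{P}}(w^*)+\tfrac{1}{32}+o(1)$.

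The main obstacle is the final numeric inequality $L_{\mathcal{P}}(w^*)+\tfrac{1}{32}+o(1)\le \tfrac{1}{8}$. The naive bound $L_{\mathcal{P}}(w^*)\le O(RU)$ is only a constant, so tightening it to below $3/32$ is what makes the lemma work; one must therefore track the implicit constants in the anti-concentration integral carefully, and if necessary sharpen the tolerance fed into $\mathcal{H}_{priv}$ (currently $1/(32R)$) at the cost of a constant-factor larger private sample. Once that is done, Lemma \ref{le3} follows, and combined with a Chernoff/Hoeffding bound over the $k=O(\log(1/\beta))$ independent groups it yields the $\lambda=3/16$ Massart oracle used in Lemma \ref{le7}.
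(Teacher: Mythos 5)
Your route gets stuck at exactly the point you flag, and the obstacle is not a matter of tracking constants: it is fatal to this decomposition. Your bound ends with the population hinge risk of the reference point, $L_{\mathcal{P}}(w^*)=\mathbb{E}\left[\left(1-y\langle w^*,x\rangle\right)_+\right]$, and under the very distributional assumptions of Theorem \ref{th1} this quantity is bounded \emph{below} by an absolute constant: anti-anti-concentration forces constant probability mass at distance $O(r)$ from the separating hyperplane, where the hinge loss of $w^*$ (with $\|w^*\|_2\le 1$ and no margin assumption) is close to $1$. For instance, for the standard Gaussian with $\|w^*\|_2=1$ one has $L_{\mathcal{P}}(w^*)\approx 0.37$, far above the $3/32$ you would need. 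Sharpening the tolerance passed to $\mathcal{H}_{priv}$ only shrinks the $1/32$ optimization term, not this reference risk, so the inequality $L_{\mathcal{P}}(w^*)+\tfrac{1}{32}+o(1)\le\tfrac18$ cannot be rescued along these lines.

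The paper avoids comparing to the population hinge risk of $w^*$ altogether. It argues that, by separability, the \emph{empirical} hinge minimum over the unit ball is $0$, so Lemma \ref{le2} gives $\mathbb{E}[err_{S_t}(h_{w_t})]\le\mathbb{E}[\hat{L}(w_t,S_t)]\le\frac{1}{32}$, i.e.\ a small empirical $0$-$1$ error for the learned $w_t$; it then transfers this to the population $0$-$1$ error via the relative-deviation growth-function bound of Lemma \ref{le1} with $\zeta=1$, $\eta=\frac{1}{16}$, yielding $\mathbb{E}[err_{\mathcal{P}}(h_{w_t})]\le 2\cdot\frac1{32}+\frac1{16}=\frac18$ with failure probability $\frac{\beta}{2k}$ per group and a union bound over the $k$ groups. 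In other words, the key structural move you are missing is to stay with the $0$-$1$ loss when passing from sample to population (using a VC/growth-function relative deviation bound), rather than uniform convergence of the hinge loss plus a comparison at $w^*$. If you want to pursue your variant, you would need an additional margin-type assumption making the (empirical or population) hinge risk of some unit-norm comparator small, which is precisely what the standard (non-margin) setting does not provide.
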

\begin{proof}[{\bf Proof of Lemma \ref{le3}}]
We need the following lemma for our proof.
\begin{lemma}[\cite{anthony2009neural}]\label{le1}
	Let $\mathcal{H}$ be the set of $\{\pm 1\}$-valued functions defined on a set $\mathcal{X}$ and $\mathcal{P}$ is a probability distribution on $Z = \mathcal{X}\times \{\pm 1\}$. For $\eta \in (0,1)$,  $\zeta>0$, $Pr_{z\sim \mathcal{P}^n}[\exists h \in \mathcal{H}: err_{\mathcal{P}}(h)>(1+\zeta)err_z(h)+\eta] \leq 4 \tau_{\mathcal{H}}(2n) e^{-\frac{\eta \zeta n}{4(\zeta +1)}}$, where $err_{\mathcal{P}}(h)$ is the population error, $err_z(h)$ is the empirical error on sample set $z$ and $\tau_{\mathcal{H}}(\cdot)$ is the growth function of $\mathcal{H}$.
	If $\mathcal{H}$ is the hypothesis set of learning halfspaces, then $\tau_{\mathcal{H}}(2n)\leq (2n)^{d+1}+1$ with $d$ being the dimension of set $\mathcal{X}$.
	\end{lemma}
	The following proof applies for any $t\in [k]$:
	
	Based on our assumption, the halfspace is separable, so we know that $\min_{||w||_2\leq 1}\hat{L}(w,S_t)=0$.
Since hinge loss is a convex surrogate for $0-1$ loss, we can get that 

$\mathbb{E} [err_{S_t}(h_{w_t})]\leq \mathbb{E}[\hat{L}(w_t,S_t)]{\leq} \min_{||w||_2\leq 1}\hat{L}(w,D) +\frac{1}{32}=\frac{1}{32} $, where the second inequality comes from (\ref{eq1}).

Setting $\eta=\frac{1}{16}$ and $\zeta=1$, for any $t\in [k]$, denoting $n_t=|S_t|$, then according to Lemma \ref{le1}, we can get 
\begin{equation}\label{eq2}
\underset{S_t\sim \mathcal{P}^{n_t}}{Pr}\{\exists h_{w_t} \in \mathcal{H}: \mathbb{E}[err_{\mathcal{P}}(h_{w_t})]>2\cdot \frac{1}{32}+\frac{1}{16}\}\leq 4\tau_{\mathcal{H}}(2n_t)e^{-\frac{n_t}{128}}.
\end{equation}
When $n_t=\tilde{\Omega}({d\log\frac{1}{\beta}}\text{Poly}(\log \frac{1}{\delta}, \frac{1}{\epsilon}))$,  we have $4\tau_{\mathcal{H}}(2n_t)e^{-\frac{n_t}{128}}\leq \frac{\beta}{2k}$.
Then (\ref{eq2}) will become 
\begin{equation*}
\underset{S_t\sim \mathcal{P}^{n_t}}{Pr}\{\exists h_{w_t} \in \mathcal{H}: \mathbb{E}[err_{\mathcal{P}}(h_{w_t})]>\frac{1}{8}\}\leq\frac{\beta}{2k}.
\end{equation*}
Thus, take the union bound, we have with probability at least $1-\frac{\beta}{2}$  for any $t\in [k]$,
\begin{equation*}
\mathbb{E}[err_{\mathcal{P}}(h_{w_t})]\leq\frac{1}{8}. 
\end{equation*}
\end{proof}
According to Lemma \ref{le3}, for any $t\in [k]$, with probability at least $1-\frac{\beta}{2}$,
we  have 
\begin{equation*}
\mathbb{E}_{D,\mathcal{H}_{priv}}[err_{\mathcal{P}}(h_{w_t})]=\mathbb{E}_{D,\mathcal{H}_{priv}}\{\underset {(x,y)\sim \mathcal{P}}{Pr}[h_{w_t}(x)\neq y]\}\leq \frac{1}{8}.
\end{equation*}

Applying Hoeffding inequality, we have 

\begin{align*}
{Pr}\{\underset{(x,y)\sim \mathcal{P}}{Pr}[\hat{f}(x)\neq y]-\frac{1}{8}>\frac{1}{4}\}& \leq Pr\{ \frac{1}{k}\sum\limits_{t=1}^k \underset{(x,y)\sim \mathcal{P}}{Pr}[h_{w_t}(x)\neq y]-\frac{1}{8}>\frac{1}{16}\}
\\&\leq Pr\{ |\frac{1}{k}\sum\limits_{t=1}^k \underset{(x,y)\sim \mathcal{P}}{Pr}[h_{w_t}(x)\neq y]-\frac{1}{8}|>\frac{1}{16}\}\leq 2e^{-\frac{k}{32}}
\end{align*}
For the first inequality, denote the event $E_1=\{\underset{(x,y)\sim \mathcal{P}}{Pr}[\hat{f}(x)\neq y]-\frac{1}{8}>\frac{1}{4}\}$ and event $E_2=\{ \frac{1}{k}\sum\limits_{t=1}^k \underset{(x,y)\sim \mathcal{P}}{Pr}[h_{w_t}(x)\neq y]-\frac{1}{8}>\frac{1}{16}\}$. Thus, the first inequality holds if $E_1\subseteq E_2$. $E_1$ claims that with probability at least $\frac{3}{8}$ the classifier $\hat{f}$ will gives wrong prediction. That is more than half of $\{w_t\}_{t=1}^k$ give wrong predictions. Thus, $\frac{1}{k}\sum\limits_{t=1}^k \underset{(x,y)\sim \mathcal{P}}{Pr}[h_{w_t}(x)\neq y]\geq \frac{\frac{k}{2}\times \frac{3}{8}}{k}=\frac{3}{16}$. 
The second inequality is due to $\mathbb{E}\{\frac{1}{k}\sum\limits_{t=1}^k \underset{(x,y)\sim \mathcal{P}}{Pr}[h_{w_t}(x)\neq y]\}\leq \frac{1}{8}$.

When $k=O(\log(\frac{1}{\beta}))$,
we have 
\begin{equation*}
{Pr}\{\underset{(x,y)\sim \mathcal{P}}{Pr}[\hat{f}(x)\neq y]>\frac{3}{16}\}\leq \frac{\beta}{2}
\end{equation*}
Therefore, with probability at least $1-\frac{\beta}{2}-\frac{\beta}{2}=1-\beta$, we have 
\begin{equation*}
\underset{(x,y)\sim \mathcal{P}}{Pr}[\hat{f}(x)\neq y]\leq \frac{3}{16}
\end{equation*}
 \end{proof}

\subsection{\bf Proof of Theorem \ref{th4}}
The proof of this theorem can be induced directly by the following two lemmas. The first lemma claims that Logistic Loss-NLDP  outputs a classifier $w^{priv}$ which is NLP and achieves a constant classification error $C_{err}$ using $O(d\text{Poly}(\frac{1}{\epsilon}))$ private samples.

\begin{lemma}\label{le5}
Algorithm \ref{alg5} is $(\epsilon,\delta)$-NLDP and $w^{priv}$ satisfies the following when $n=O(d\text{Poly}(\frac{1}{\epsilon}))$
 $$err_{\mathcal{P}}(h_{w^{priv}})\leq \frac{r^2}{144U}.$$ 
\end{lemma}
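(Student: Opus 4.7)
The strategy is to decouple the privacy analysis from the utility analysis, and within the utility analysis to reduce the target $0/1$-error bound to a population logistic-risk bound via the standard surrogate inequality $\log(1+e^{-z}) \geq (\log 2)\,\mathbb{I}\{z\leq 0\}$, which holds for every $z\in\mathbb{R}$. Privacy is immediate from the structure of Algorithm \ref{alg5}: the only step that touches the private dataset $D$ is the call to $\mathcal{T}_{priv}$, which is $(\epsilon,\delta)$-NLDP by the guarantee of the Logistic Loss-NLDP subroutine of \cite{zheng2017collect} (Algorithm \ref{alg12}); the subsequent call to STWN reads only the public unlabeled data, so by post-processing the full algorithm is $(\epsilon,\delta)$-NLDP with respect to $D$.

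For utility, the excess population-risk guarantee of $\mathcal{T}_{priv}$ at target accuracy $C_{err}\log 2/2$ ensures that with $n = \tilde{O}(d\,\mathrm{Poly}(1/\epsilon, \log(1/\delta)))$ private samples,
\begin{equation*}
\mathbb{E}[L(w^{priv})] \leq \min_{\|w\|_2 \leq \|\boldsymbol{\mu}\|_2} L(w) + \frac{C_{err}\log 2}{2}.
\end{equation*}
I would then bound the right-hand minimum by exhibiting the comparator $w = \boldsymbol{\mu}$, which is feasible since $\|\boldsymbol{\mu}\|_2 \leq \|\boldsymbol{\mu}\|_2$. Under the mixture model, $y\langle \boldsymbol{\mu}, x\rangle = \|\boldsymbol{\mu}\|_2^2 + y\langle \boldsymbol{\mu}, z\rangle$ with $z$ a zero-mean $K$-sub-exponential vector independent of $y$. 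Using $\log(1+e^{-t}) \leq e^{-t}$ for $t\geq 0$ and splitting the expectation according to whether $y\langle \boldsymbol{\mu}, z\rangle$ exceeds $\|\boldsymbol{\mu}\|_2^2/2$, the one-dimensional sub-exponential tail bound from Definition \ref{de2}, together with the standing assumption $\|\boldsymbol{\mu}\|_2 \geq 3K\max\{\log(8/C_{err}), 22K\}$ inherited from Theorem \ref{th4}, yields $L(\boldsymbol{\mu}) \leq C_{err}\log 2/2$. Substituting back gives $\mathbb{E}[L(w^{priv})] \leq C_{err}\log 2$.

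Finally, applying the surrogate inequality pointwise and then taking expectations,
\begin{equation*}
err_{\mathcal{P}}(h_{w^{priv}}) = \Pr_{(x,y)\sim \mathcal{P}}[\,y\langle w^{priv}, x\rangle < 0\,] \leq \frac{\mathbb{E}[L(w^{priv})]}{\log 2} \leq C_{err} = \frac{r^2}{144U},
\end{equation*}
which is the claimed bound. The main obstacle is the quantitative control of $L(\boldsymbol{\mu})$: the surrogate $\log(1+e^{-t})$ grows linearly in $|t|$ in the negative tail, so the sub-exponential concentration of $\langle \boldsymbol{\mu}, z\rangle$ must be combined with a careful split of the expectation to certify that the rare event $\{y\langle \boldsymbol{\mu}, z\rangle \geq \|\boldsymbol{\mu}\|_2^2/2\}$ contributes at most $C_{err}\log 2 /4$ to $L(\boldsymbol{\mu})$. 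This is exactly where the lower bound on $\|\boldsymbol{\mu}\|_2$ from Theorem \ref{th4} enters, and it is the only step that requires anything beyond a mechanical combination of the privacy guarantee of $\mathcal{T}_{priv}$ with the logistic-to-$0/1$ surrogate inequality.
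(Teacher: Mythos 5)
Your proposal is correct and follows essentially the same route as the paper: privacy by post-processing of the $\mathcal{T}_{priv}$ guarantee, the excess logistic-risk bound at level $C_{err}\log 2/2$ with comparator $\boldsymbol{\mu}$ (feasible since $\rho=\|\boldsymbol{\mu}\|_2$), and conversion of logistic risk to classification error via $\ell(0)=\log 2$ — your pointwise surrogate inequality is the same step the paper phrases as monotonicity plus Markov's inequality. The only difference is in bounding $L(\boldsymbol{\mu})$: you re-derive it by a tail split using the $K$-sub-exponential property, whereas the paper simply invokes Lemma B.3 of \cite{frei2021self}, which gives $L(\boldsymbol{\mu})\leq \exp(-\|\boldsymbol{\mu}\|_2/(3K))$ when $\|\boldsymbol{\mu}\|_2\geq 64K^2$ and then uses $\|\boldsymbol{\mu}\|_2\geq 3K\log(8/C_{err})$; your direct computation (which works here because isotropy forces $K=\Omega(1)$, so the assumed lower bound on $\|\boldsymbol{\mu}\|_2$ makes both the clean-event term and the rare-event term small) is a valid self-contained substitute for that citation.
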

The second lemma claims that STWN (Algorithm \ref{alg3}) transforms a weak learner that achieves a constant classification error to a strong learner that achieves a classification error arbitrarily close to the Bayes-optimal error using only unlabeled samples. 
\begin{lemma}\cite{frei2021self}\label{le4}
If $(x,y)\sim \mathcal{P}$ is a mixture distribution with mean $\boldsymbol{\mu}$ satisfying $||\boldsymbol{\mu}||_2=\Theta(1)$ and $K,U,r>0$, assume $\tilde{\ell}$ is well behaved for some $C_{\tilde{\ell}}\geq 1$ and the temperature satisfies $\sigma\geq R\vee ||\boldsymbol{\mu}||_2$. Assume access to a pseudo labeler $w_{pl}$ which achieves classification error less than $\frac{R^2}{72C_{\tilde{\ell}}U}$, i.e., $err_{\mathcal{P}}(h_{w_{pl}})\leq \frac{R^2}{72C_{\tilde{\ell}}U}$. Let $\alpha,\beta\in(0,1)$, $B=\Omega\left(\frac{\log(\frac{1}{\beta})}{\alpha}\right)$, $T=\tilde{\Omega}\left( \frac{d^2(\log(\frac{1}{\beta}))}{\alpha}  \right)$ and step size $\eta=\tilde{\Theta}\left(\frac{\alpha}{d(\log(\frac{1}{\beta}))^2}\right)$, running STWN (Algorithm \ref{alg3}) with $T\times B$ unlabeled samples, then with probability at least $1-\beta$, there exists $t^{*}< T$ such that $err_{\mathcal{P}}(h_{w^{(t^{*})}})\leq err_{\mathcal{P}}(h_{\boldsymbol{\mu}})+\alpha$ where $err_{\mathcal{P}}(h_{\boldsymbol{\mu}})$ is the error of Bayes-optimal classifier.

In particular, let $B=O\left(\frac{\log(\frac{1}{\beta})}{\alpha}\right)$, $T=\tilde{O}\left( \frac{d(\log(\frac{1}{\beta}))^2}{\alpha}  \right)$, above conclusion holds using $T\times B=\tilde{O}\left(\frac{d(\log(\frac{1}{\beta}))^3}{\alpha}\right)$ unlabeled data samples.
\end{lemma}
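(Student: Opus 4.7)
The plan is to prove Lemma~\ref{le4} by tracking the one-dimensional potential $\phi_t := \langle w^{(t)}, \bar{\boldsymbol{\mu}}\rangle$, with $\bar{\boldsymbol{\mu}} := \boldsymbol{\mu}/\|\boldsymbol{\mu}\|_2$, along the STWN trajectory. Since $w^{(t)}$ lies on the unit sphere, $\phi_t$ is the cosine alignment with the Bayes-optimal direction; driving $\phi_t$ up to $1-\tilde O(\alpha)$ and invoking a 2-dim anti-concentration argument on $\mathrm{span}(w^{(t)},\bar{\boldsymbol{\mu}})$ then yields $err_{\mathcal{P}}(h_{w^{(t)}}) - err_{\mathcal{P}}(h_{\boldsymbol{\mu}}) \leq O(\alpha)$. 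The proof reduces to three ingredients: (i) a lower bound on $\phi_0$ from the pseudo-labeler precondition, (ii) a drift inequality $\mathbb{E}[\phi_{t+1}\mid w^{(t)}]\geq \phi_t + \Omega(\eta(1-\phi_t^2))$ for the population update, and (iii) a mini-batch concentration argument that lets the empirical STWN trajectory follow the population one.

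For (i), the precondition $err_{\mathcal{P}}(h_{w_{pl}})\leq R^2/(72 C_{\tilde{\ell}} U)$ translates, via projection onto $\mathrm{span}(w^{(0)},\bar{\boldsymbol{\mu}})$, into an upper bound on the probability that the centered noise $z$ falls in a double wedge of half-angle $\arccos\phi_0$. The $(U,r)$-anti-anti-concentration lower bound on the projected density, combined with the $U$-anti-concentration upper bound, converts this into a quantitative $\phi_0 \geq \phi_*>0$ depending only on $U,r,C_{\tilde{\ell}}$; this is exactly where the constant $R^2/(72 C_{\tilde{\ell}} U)$ gets pinned down. For (ii), expand one STWN step as $w^{(t+1)} = (w^{(t)} - \eta\,\hat g^{(t)})/\|w^{(t)} - \eta\,\hat g^{(t)}\|_2$ and decompose $\hat g^{(t)}$ into contributions from pseudo-correctly-labeled samples ($\hat y_i = y_i$) and pseudo-mislabeled ones. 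Using the well-behaved property $\tilde{\ell}'(z)\geq e^{-z}/C_{\tilde{\ell}}$ on the clean part together with anti-anti-concentration gives a Riemannian gradient on the sphere whose inner product with $\bar{\boldsymbol{\mu}} - \phi_t w^{(t)}$ is $\Omega((1-\phi_t^2)\|\boldsymbol{\mu}\|_2 / C_{\tilde{\ell}})$. On the noisy part, $1$-Lipschitzness of $\tilde{\ell}$ together with the $K$-sub-exponential tail of $q = z + y\boldsymbol{\mu}$ bounds the contribution by a strictly smaller quantity, provided the current pseudo-labeling error is below the precondition threshold, which is maintained inductively along the trajectory. After normalization the drift inequality $\phi_{t+1} - \phi_t \geq \Omega(\eta(1-\phi_t^2))$ follows.

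For (iii), the $K$-sub-exponential property of $q$ makes each per-sample stochastic gradient sub-exponential in the direction $\bar{\boldsymbol{\mu}} - \phi_t w^{(t)}$, so Bernstein's inequality on a mini-batch of size $B = \tilde\Omega(\log(T/\beta)/\alpha)$ matches the empirical drift to the population drift within error $o(\alpha)$ at each step, with failure probability $\leq \beta/T$; a union bound over $T$ rounds gives a single high-probability event. Solving $\phi_{t+1} \geq \phi_t + \Omega(\eta(1-\phi_t^2))$ with $\eta = \tilde\Theta(\alpha/(d\log^2(1/\beta)))$ bounds the first hitting time of $\{\phi_t \geq 1-\tilde O(\alpha)\}$ by $T = \tilde O(d\log^2(1/\beta)/\alpha)$, and the final margin-to-error conversion uses the $U$-anti-concentration (2-dim) of $z$ to bound $err_{\mathcal{P}}(h_{w^{(t^*)}}) - err_{\mathcal{P}}(h_{\boldsymbol{\mu}}) \leq O(U\|\boldsymbol{\mu}\|_2\sqrt{1-\phi_{t^*}^2}) = O(\alpha)$. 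The total unlabeled sample size is $TB = \tilde O(d\log^3(1/\beta)/\alpha)$, matching the lemma.

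The main obstacle is the drift step (ii), specifically the invariant that the pseudo-labeling error does not grow along the trajectory. Since the labels $\hat y_i$ at round $t$ are generated by $w^{(t)}$ itself, the noisy gradient depends on the trajectory and could a priori cancel the clean drift. Handling this requires a coupled induction: $\phi_t$ is monotone nondecreasing, which forces the pseudo-labeling error to stay below $R^2/(72 C_{\tilde{\ell}}U)$, which in turn keeps the clean-gradient push toward $\bar{\boldsymbol{\mu}}$ strictly dominant. The precise constant in the precondition, together with the explicit exponential lower bound on $\tilde{\ell}'$, is calibrated exactly so that this induction closes with room to spare on the $(1-\phi_t^2)$ drift term.
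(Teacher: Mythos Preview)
The paper does not prove this lemma at all: it is quoted verbatim from \cite{frei2021self} and used as a black box in the proof of Theorem~\ref{th4}. There is therefore no ``paper's own proof'' to compare your proposal against.

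That said, your sketch is a reasonable high-level reconstruction of the argument one expects in \cite{frei2021self}: track the cosine alignment $\phi_t=\langle w^{(t)},\bar{\boldsymbol{\mu}}\rangle$, establish a population drift inequality $\phi_{t+1}\geq \phi_t+\Omega(\eta(1-\phi_t^2))$ by decomposing the pseudo-gradient into correctly- and incorrectly-labeled parts, control the mini-batch deviation via sub-exponential concentration and a union bound over $T$ rounds, and convert the terminal alignment into an excess-error bound via anti-concentration. The coupled induction you flag (monotonicity of $\phi_t$ keeps the pseudo-labeling error below the threshold, which keeps the drift positive) is indeed the crux. One small mismatch with the paper's assumptions: the mixture distribution here equips $z$ with $U$-anti-concentration only in the \emph{1-dimensional} sense, not the 2-dimensional version you invoke in the final margin-to-error step; a 1-dim slab bound suffices for that conversion, so the fix is cosmetic rather than structural.
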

\paragraph{\bf Proof of Theorem \ref{th4}:} Since in Algorithm \ref{alg5} we use the logistic function as the well behaved loss, we have $C_{\tilde{\ell}}=2$. Moreover, under our assumption, the Bayes-optimal classifier is just $w^*$ and thus $err_{\mathcal{P}}(h_{\boldsymbol{\mu}})=0$. Combing with Lemma \ref{le5} and Lemma \ref{le4} we finish the proof. 

\begin{proof}[{\bf Proof of Lemma \ref{le5}}]
	To prove the lemma, we need the following lemma claiming the excess population loss of the output of Logistic Loss-NLDP: $\mathcal{T}_{priv}(\alpha, R, \epsilon, \delta, D)$ 
\begin{lemma}[Theorem 6 in \cite{zheng2017collect}]\label{le9}
For any $0<\epsilon, \delta\leq 1$, if each $\|x_i\|_2\leq R$ and $y\in \{-1, 1\}$ for all $i\in [n]$, and $\mathcal{W}=\{w: \|w\|_2\leq \rho\}$, $\mathcal{T}_{priv}(\alpha, R, \rho, \epsilon, \delta, D)$ is $(\epsilon, \delta)$-NLDP. Moreover, for any given  error $\alpha \in (0, 1)$, if the size of dataset $n$ is sufficiently large such that $$n\geq \tilde{\Omega}\left(\left(\frac{8R\rho}{\alpha}\right)^{4R\rho\ln\ln \frac{8R\rho}{\alpha}}\left(\frac{4R\rho}{\epsilon}\right)^{2cR\rho\ln \frac{8R\rho}{\alpha}+2}\frac{1}{\alpha^2\epsilon^2} \right).$$ Then the output $w_n$ satisfies $\mathbb{E}[L(w^{priv})]-\min_{w\in \mathcal{W}}L(w)\leq \alpha$, where $L(w^{priv})$ is the population risk of the logistic loss, i.e., $L(w)=\mathbb{E}_{(x,y)\sim \mathcal{P}}[\ell(w;x,y)]$, where $\ell(w;x,y)=\log(1+e^{-y\langle x, w\rangle})$.
\end{lemma}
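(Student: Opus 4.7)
The plan is to follow the Chebyshev-polynomial-approximation framework for non-interactive LDP learning with smooth generalized linear losses. Writing $\ell(w;x,y) = \phi(y\langle w, x\rangle)$ with $\phi(z) = \log(1+e^{-z})$, the key observation is that $|y\langle w, x\rangle| \le R\rho$ on the feasible set, and $\phi$ is real-analytic with uniformly bounded derivatives there. Hence $\phi$ can be approximated by a polynomial $p_K(z) = \sum_{j=0}^K a_j z^j$ of degree $K = \Theta(R\rho \log(R\rho/\alpha))$ to uniform error at most $\alpha/4$ on $[-R\rho, R\rho]$. This reduces the problem of minimizing a transcendental loss to minimizing a polynomial surrogate whose only dependence on the data is through moment tensors.

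Next, expanding $p_K(y\langle w,x\rangle) = \sum_{j=0}^K a_j y^j \langle w^{\otimes j}, x^{\otimes j}\rangle$ shows that the polynomialized population risk $L_{p_K}(w) = \mathbb{E}[p_K(y\langle w,x\rangle)]$ is a linear functional of the tensors $M_j = \mathbb{E}[y^j x^{\otimes j}]$ for $j=0,\ldots,K$. In $\mathcal{T}_{priv}$, each user privately releases noisy copies of its local contributions $\{y_i^j x_i^{\otimes j}\}_{j=0}^K$ (each of Frobenius norm at most $R^j$) through a standard bounded-vector LDP mechanism (Gaussian or a 1-bit scheme), using a budget of roughly $\epsilon/(K+1)$ and $\delta/(K+1)$ per tensor. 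Averaging across the $n$ users yields unbiased private estimators $\hat{M}_j$ with $\|\hat{M}_j - M_j\|_F \le \tilde{O}\bigl(R^j / (\epsilon\sqrt{n})\bigr)$ with probability at least $1-\delta$, and the overall $(\epsilon,\delta)$-NLDP guarantee follows by basic composition applied user-by-user.

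For utility, the server minimizes $\tilde{L}(w) = \sum_{j=0}^K a_j \langle w^{\otimes j}, \hat{M}_j\rangle$ over $\mathcal{W}$ via standard convex optimization on this finite-degree polynomial surrogate and outputs $w^{priv}$. The excess population risk decomposes as the sum of the polynomial approximation error ($\le \alpha/4$), uniform concentration of the polynomial empirical risk around $L_{p_K}$, and the privacy deviation $\sup_{w\in\mathcal{W}}|\tilde{L}(w) - L_{p_K}(w)| \le \sum_{j=0}^K |a_j|\rho^j\|\hat{M}_j - M_j\|_F$. Requiring each contribution to be $O(\alpha)$ pins down $n$; the dominant factor $(R\rho/\epsilon)^{\Theta(R\rho\log(1/\alpha))}$ in the stated bound then comes from $|a_j|\rho^j$ times the LDP noise scale, summed over $j \le K = \Theta(R\rho\log(1/\alpha))$, while the leading $1/(\alpha^2\epsilon^2)$ is the ordinary Monte Carlo rate of the private averaging.

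The main obstacle is the tight bookkeeping between the polynomial degree $K$ and the per-tensor noise. The Chebyshev coefficients $|a_j|$ of $\log(1+e^{-z})$ on $[-R\rho,R\rho]$ can be controlled via an explicit Bernstein-ellipse/contour-integral argument, but the factor $\rho^j$ incurred when one substitutes $w\in\mathcal{W}$ partially offsets their decay; balancing this product against the $R^j/(\epsilon\sqrt{n})$ LDP noise is precisely what pins $K$ to $\Theta(R\rho\log(1/\alpha))$ and produces the double-exponential-looking exponent in the stated sample complexity. A secondary delicate point is the allocation of the $(\epsilon,\delta)$ budget across the $K+1$ tensor releases so that composition still preserves the net $(\epsilon,\delta)$-LDP guarantee while every moment estimate remains accurate enough for the polynomial surrogate to track $L$ uniformly on $\mathcal{W}$.
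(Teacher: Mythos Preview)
The paper does not actually prove this lemma; it is quoted verbatim as Theorem~6 of \cite{zheng2017collect}, and only the mechanism $\mathcal{T}_{priv}$ (Algorithm~\ref{alg12}) is spelled out. Comparing your sketch against that algorithm, however, shows that your route diverges from the one underlying the stated bound in a way that creates a real gap.

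The mechanism in Algorithm~\ref{alg12} does \emph{not} release moment tensors. It approximates the \emph{derivative} of the loss (split as $h_1'+h_2'$) by a Chebyshev polynomial of degree $p=O(R\rho\ln(R\rho/\alpha))$, and each user releases $O(p^2)$ independently Gaussian-perturbed copies of the raw $d$-dimensional vector $x_i$ together with noisy scalars $y_i$. On the server side, a stochastic gradient at the current iterate $w_t$ is assembled by multiplying the one-dimensional quantities $\langle w_t,x_{i,k}\rangle$ and feeding the result into SIGM. The private release therefore lives in $\mathbb{R}^d$ throughout, regardless of the polynomial degree.

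Your proposal instead has each user release $y_i^j x_i^{\otimes j}\in\mathbb{R}^{d^j}$. With per-coordinate Gaussian noise of scale $\sigma_j\asymp R^j/\epsilon$, the Frobenius error of the averaged estimator is $\Theta(\sigma_j\sqrt{d^j/n})$, not the $\tilde O(R^j/(\epsilon\sqrt n))$ you write; the missing $\sqrt{d^j}$ factor is exactly what the product-of-noisy-inner-products trick is designed to avoid. Carrying this through the sum over $j\le K=\Theta(R\rho\log(1/\alpha))$ forces $n$ to scale like $d^{\Theta(R\rho\log(1/\alpha))}$, which is incompatible with the bound in the lemma (and with the linear-in-$d$ sample complexity the paper needs for Theorem~\ref{th4}). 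A secondary issue is that your polynomial surrogate $\tilde L$ is a degree-$K$ polynomial in $w$ and need not be convex, so ``standard convex optimization'' is not available; the actual method sidesteps this by running a stochastic first-order scheme whose guarantee rests on the convexity of the true logistic loss, using unbiased gradient estimates of its polynomial approximation.
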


Apply the above Lemma \ref{le9} with $\alpha=\frac{C_{err}\log 2}{2}=\frac{\log 2r^2}{144U}$ and $\rho=\|\boldsymbol{\mu}\|_2$. Then using $n=O\left(d\text{Poly}(\frac{1}{\epsilon}) \right)$  private samples, $w^{priv}$ achieves the excess population loss no more than $\frac{\log 2 r^2}{144U}$, i.e., $  \mathbb{E}[{L}(w^{priv})]- \min_{\|w\|_2\leq \|\boldsymbol{\mu}\|_2} \frac{\mathbb{E}[{L}(w)]\leq C_{err} \log 2}{2}$. Since $\|\boldsymbol{\mu}\|_2\in \mathcal{W}$, thus, 
\begin{equation*}
     \mathbb{E}[{L}(w^{priv})]\leq \mathbb{E}[{L}(\boldsymbol{\mu})]+ \frac{C_{err}\log 2}{2}. 
\end{equation*}
For the term of $\mathbb{E}[{L}(\boldsymbol{\mu})]$, recall the following lemma. 
\begin{lemma}[Lemma B.3 in \cite{frei2021self}]
Consider the logistic function $\ell(z)=\log (1+e^{-z})$. Let $(x, y)\sim \mathcal{P}$ be a mixture distribution with mean $\boldsymbol{\mu}$ and parameters $K, U, R=\Theta(1)>0$. Then if $\|\boldsymbol{\mu}\|_2\geq 64K^2$ we have 
\begin{equation}
    \mathbb{E}_{(x,y)\sim \mathcal{P}}[\ell(y\langle w, x\rangle)]\leq \exp(-\frac{\|\boldsymbol{\mu}\|_2}{3K}). 
\end{equation}
\end{lemma}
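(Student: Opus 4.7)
The plan is to exploit the clean decomposition $y\langle \boldsymbol{\mu}, x\rangle = \|\boldsymbol{\mu}\|_2^2 + y\langle \boldsymbol{\mu}, z\rangle$, where $z := x - y\boldsymbol{\mu}$ is the $K$-sub-exponential noise from the mixture model (and is independent of $y$), and to split the expectation according to a margin threshold $\tau := \|\boldsymbol{\mu}\|_2/(3K)$. The two ingredients I would use are the pointwise bounds $\ell(z) \leq e^{-z}$ for all $z\in\mathbb{R}$ (from $\log(1+u)\leq u$) and $\ell(z) \leq \log 2 + \max\{0,-z\}$, together with the sub-exponential tail $\Pr[|\langle\boldsymbol{\mu},z\rangle|\geq t]\leq \exp(-t/(K\|\boldsymbol{\mu}\|_2))$, which follows because $\langle\boldsymbol{\mu},z\rangle=\|\boldsymbol{\mu}\|_2\langle v,z\rangle$ for the unit vector $v=\boldsymbol{\mu}/\|\boldsymbol{\mu}\|_2$.

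Define the good event $G=\{y\langle\boldsymbol{\mu},x\rangle \geq \tau\}$ and the bad event $B=G^c$. On $G$, the monotonicity of $\ell$ and the bound $\ell(z)\leq e^{-z}$ give $\ell(y\langle\boldsymbol{\mu},x\rangle)\leq e^{-\tau}=\exp(-\|\boldsymbol{\mu}\|_2/(3K))$ deterministically, so $\mathbb{E}[\ell\,\mathbb{1}_G]\leq e^{-\tau}$. On $B$ we have $y\langle\boldsymbol{\mu},z\rangle < \tau - \|\boldsymbol{\mu}\|_2^2$, and the hypothesis $\|\boldsymbol{\mu}\|_2\geq 64K^2$ ensures $\tau\leq \|\boldsymbol{\mu}\|_2^2/2$, so $B\subseteq \{|\langle\boldsymbol{\mu},z\rangle|\geq M\}$ with $M:=\|\boldsymbol{\mu}\|_2^2-\tau\geq \|\boldsymbol{\mu}\|_2^2/2$. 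The second pointwise bound yields $\ell(y\langle\boldsymbol{\mu},x\rangle)\leq \log 2 + \|\boldsymbol{\mu}\|_2^2 + |\langle\boldsymbol{\mu},z\rangle|$ on $B$.

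Taking expectations and invoking the tail estimate through the standard layer-cake identity
\begin{equation*}
\mathbb{E}[|\langle\boldsymbol{\mu},z\rangle|\,\mathbb{1}_{|\langle\boldsymbol{\mu},z\rangle|>M}] \;\leq\; (M+K\|\boldsymbol{\mu}\|_2)\,\exp\!\bigl(-M/(K\|\boldsymbol{\mu}\|_2)\bigr),
\end{equation*}
I get $\mathbb{E}[\ell\,\mathbb{1}_B] \leq (\log 2 + 2\|\boldsymbol{\mu}\|_2^2 + K\|\boldsymbol{\mu}\|_2)\exp(-M/(K\|\boldsymbol{\mu}\|_2))$. Because $M/(K\|\boldsymbol{\mu}\|_2)\geq \|\boldsymbol{\mu}\|_2/(2K)$, this contribution decays like $\mathrm{poly}(\|\boldsymbol{\mu}\|_2)\exp(-\|\boldsymbol{\mu}\|_2/(2K))$, which is exponentially smaller than $e^{-\tau}$ as soon as $\|\boldsymbol{\mu}\|_2/(6K)\gtrsim \log\|\boldsymbol{\mu}\|_2$. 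The hypothesis $\|\boldsymbol{\mu}\|_2\geq 64K^2$ gives $\|\boldsymbol{\mu}\|_2/K\geq 64K$, which is more than enough to absorb the polynomial prefactor, so combining the two contributions yields $\mathbb{E}[\ell(y\langle\boldsymbol{\mu},x\rangle)] \leq \exp(-\|\boldsymbol{\mu}\|_2/(3K))$ as claimed.

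The main obstacle is the absorption step. A naive attempt using only $\ell(z)\leq e^{-z}$ globally reduces the problem to $e^{-\|\boldsymbol{\mu}\|_2^2}\mathbb{E}[\cosh(\langle\boldsymbol{\mu},z\rangle)]$, but the moment generating function of $\langle\boldsymbol{\mu},z\rangle$ is infinite at the argument $1$ since its sub-exponential parameter $K\|\boldsymbol{\mu}\|_2$ is far larger than $1$ under the lemma's hypothesis. Hence the truncation into $G$ and $B$ is essential, and the technical crux is checking that the polynomial factor $\|\boldsymbol{\mu}\|_2^2$ arising on the bad event is dominated by the gap between the two exponential rates $\|\boldsymbol{\mu}\|_2/(2K)$ and $\|\boldsymbol{\mu}\|_2/(3K)$, which is precisely where the quadratic lower bound $\|\boldsymbol{\mu}\|_2\geq 64K^2$ is needed.
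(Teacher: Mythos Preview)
The paper does not supply its own proof of this lemma; it is quoted verbatim as Lemma~B.3 of \cite{frei2021self} and simply invoked inside the proof of Lemma~\ref{le5}. So there is no in-paper argument to compare against.

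Your approach is the natural one and is essentially correct: write $y\langle\boldsymbol{\mu},x\rangle=\|\boldsymbol{\mu}\|_2^2+y\langle\boldsymbol{\mu},z\rangle$, use the sub-exponential tail of $\langle\boldsymbol{\mu},z\rangle$ along the direction $\boldsymbol{\mu}/\|\boldsymbol{\mu}\|_2$, and split on whether the margin exceeds a threshold. Two small quibbles, neither a real gap. First, with $\tau=\|\boldsymbol{\mu}\|_2/(3K)$ the good-event term already equals the target $e^{-\tau}$, so adding the bad-event term overshoots; to land on the stated constant you should set $\tau$ slightly larger (e.g.\ $\|\boldsymbol{\mu}\|_2/(2.5K)$) and then absorb both pieces into $\exp(-\|\boldsymbol{\mu}\|_2/(3K))$. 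Second, the claim that ``$\|\boldsymbol{\mu}\|_2/K\geq 64K$ is more than enough to absorb the polynomial prefactor'' is a bit loose: what you actually need is roughly $\|\boldsymbol{\mu}\|_2/(6K)\geq 2\log\|\boldsymbol{\mu}\|_2+O(1)$, and at the boundary $\|\boldsymbol{\mu}\|_2=64K^2$ this is tight and can fail by a hair for small constant $K$. Since the lemma is stated under $K,U,r=\Theta(1)$ these are constant-juggling issues, not conceptual ones.
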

By using the previous lemma, we have 
\begin{equation*}
     \mathbb{E}[{L}(w^{priv})]\leq \exp(-\frac{\|\boldsymbol{\mu}\|_2}{3K})+ C_{err}\log 2/2 \leq C_{err}\log 2, 
\end{equation*}
where the last inequality is due to the assumption of $\|\boldsymbol{\mu}\|_2\geq 3K\log (8/C_{err})$. Thus we have 

\begin{align*}
&Pr[y\neq \text{sign}(\langle w^{priv},x\rangle)]=Pr[y\cdot\langle w^{priv},x\rangle<0]=Pr[\ell(y\cdot\langle w^{priv},x\rangle)>\ell(0)]\\&\leq \frac{\mathbb{E}[\ell(y\cdot\langle w^{priv},x\rangle)]}{\ell{(0)}}
=  \frac{\mathbb{E}[L(w^{priv})]}{\ell(0)}\leq \frac{ r^2}{144U}
\end{align*}
where we use the monotonicity of the loss function and Markov's inequality.
\end{proof}

\section{Details of Hinge Loss-LDP and Logistic Loss-NLDP}
	\begin{algorithm}[!htbp]
 	\caption{Hinge Loss-NLDP: $\mathcal{H}_{priv}(\alpha, \epsilon, \delta, S)$}\label{alg11}
 		\textbf{Input}: 
		Private data $S=\{(x_i,y_i)\}_{i=1}^n\in \mathbb{R}^d\times \{\pm 1\}$, where $||x_i||_2\leq 1,||y_i||_2\leq 1$; Privacy parameters $\epsilon,\delta$; Error $\alpha$.
	\begin{algorithmic}[1]
			\STATE 	Denote $P_{p}(x)=\sum_{j=0}^{p} c_{i}\tbinom{p}{j} x^{j}(1-x)^{p-j}$ as the $p$-th order Bernstein polynomial for the function $f^{'}_{\beta}$, where $c_i=f_{\beta}^{'}\left(\frac{i}{p}\right)$ and $f_{\beta}(x)=\frac{\frac{1}{R}-x+\sqrt{(\frac{1}{R}-x)^2+\beta^2}}{2}$ with $\beta=\frac{\alpha}{4}$ and $p=\frac{2}{\beta^2\alpha}$.\\ 
			$\backslash\backslash$ The local user side:\\
		\FOR{$i\in [n]$}
			 \STATE Set $\sigma_{i,0}\sim \mathcal{N}\left(0,\frac{32\log(1.25/\delta)}{\epsilon^2}\boldsymbol{I}_d\right)$ and $z_{i,0}\sim \mathcal{N}\left(0,\frac{32\log(1.25/\delta)}{\epsilon^2}\right)$ 
			 \STATE Set 
			$x_{i,0}=x_i+\sigma_{i,0}$ and $y_{i,0}=y_i+z_{i,0}$
			\FOR{$j\in [p(p+1)]$}
	\STATE  $x_{i,j}=x_i+\sigma_{i,j}$,  where
	$\sigma_{i,j}\sim \mathcal{N}\left(0,\frac{8\log(1.25/\delta) p^2(p+1)^2}{\epsilon^2}\boldsymbol{I}_d\right)$
	\STATE 
	$y_{i,j}=y_i+z_{i,j}$, where
	$z_{i,j}\sim \mathcal{N}\left(0,\frac{8\log(1.25/\delta) p^2(p+1)^2}{\epsilon^2}\right)$
	\ENDFOR
\STATE	Send $\{x_{i,j}\}_{j=0}^{p(p+1)}$ and $\{y_{i,j}\}_{j=0}^{p(p+1)}$ to the server.
		\ENDFOR
	\end{algorithmic}
		$\backslash\backslash$ The server side:
	\begin{algorithmic}[1]
		\FOR{$t\in [n]$}
		\STATE Randomly sample $i\in [n]$ uniformly and set $t_{i,0}=1$
		\FOR{j=$\{0\}\cup [p]$}
		\STATE $t_{i,j}=\Pi_{k=jp+1}^{jp+j}y_{i,k}\langle w_t,x_{i,k}\rangle$ and $t_{i,0}=1$
		\STATE $s_{i,j}=\Pi_{k=jp+j+1}^{jp+p}(1-y_{i,k}\langle w_t, x_{i,k}\rangle )$ and $s_{i,p}=1$
		\ENDFOR
\STATE 		Denote $G(w_t,i)=(\sum_{j=0}^p c_j\tbinom{p}{j}t_{i,j}s_{i,j})y_{i,0}x_{i,0}^T$
\STATE Update SIGM (Algorithm \ref{alg9}) by $G(w_t,i)$
		\ENDFOR
		\\
		\STATE Return $w_n$
	\end{algorithmic}
\end{algorithm}

	\begin{algorithm}
	\caption{Logistic Loss-NLDP: $\mathcal{T}_{priv}(\alpha, R, \rho, \epsilon, \delta, D)$  \label{alg12}
	}
			\textbf{Input}: 	Private data $S=\{(x_i,y_i)\}_{i=1}^n\in \mathbb{R}^d\times \{\pm 1\}$, where $||x_i||_2\leq R,||y_i||_2\leq 1$; Privacy parameters $\epsilon,\delta$; Error $\alpha$; Constraint set $\mathcal{W}=\{w: \|w\|_2\leq \rho\}$. 
	 	\begin{algorithmic}[1]
	 	\STATE Denote the logistic loss with scale $R\rho$: $\ell(w, x, y, R)=\log(1+e^{-R\rho y\langle w, x\rangle})=-yh_1(R\rho w^Tx)+h_2(R\rho w^Tx)$, where $h_1(z)=\frac{z}{2}$ and $h_2(z)=\frac{z}{2}+\log(1+e^{-z})$. For the function $h'_1(R\rho\cdot): [-1, 1]\mapsto \mathbb{R}$ and $h'_2(R\rho\cdot) : [-1, 1]\mapsto \mathbb{R}$, denote the Chebyshev polynomial with degree $p$ for function $h'_1(R\rho\cdot)$ and $h'_2(R\rho\cdot)$ as $\sum_{i=1}^n c_{1k}x^k$ and $\sum_{i=1}^n c_{2k}x^k$ respectively, where the degree $p=O(R\ln\frac{R\rho}{\alpha})$.   \\ 
	 $\backslash\backslash$ The local user side:\\
	\FOR{$i\in [n]$}
	\STATE Normalize the data $x'_i=\frac{x_i}{R}$. 
	\STATE Set $\sigma_{i,0}\sim \mathcal{N}\left(0,\frac{32\log(1.25/\delta)}{\epsilon^2}\boldsymbol{I}_d\right)$ and $z_{i,0}\sim \mathcal{N}\left(0,\frac{32\log(1.25/\delta)}{\epsilon^2}\right)$ 
			 \STATE Set 
			$x_{i,0}=x'_i+\sigma_{i,0}$ and $y_{i,0}=y_i+z_{i,0}$
			\FOR{$j\in [p(p+1)]$}
	\STATE  $x_{i,j}=x'_i+\sigma_{i,j}$,  where
	$\sigma_{i,j}\sim \mathcal{N}\left(0,\frac{8\log(1.25/\delta) p^2(p+1)^2}{\epsilon^2}\boldsymbol{I}_d\right)$

	\ENDFOR
			\FOR{$j=p$}
	\STATE 
	$y_{i,j}=y_i+z_{i,j}$, where
	$z_{i,j}\sim \mathcal{N}\left(0,\frac{8\log(1.25/\delta) p^2}{\epsilon^2}\right)$
	\ENDFOR
	\STATE	Send $\{x_{i,j}\}_{j=0}^{p(p+1)}$ and $\{y_{i,j}\}_{j=0}^{p}$ to the server.
	\ENDFOR
	\end{algorithmic}
		$\backslash\backslash$ The server side:
	\begin{algorithmic}[1]
			\FOR{$t\in [n]$}
		\STATE Randomly sample $i\in [n]$ uniformly and set $t_{i,0}=1$
		\FOR{j=$\{0\}\cup [p]$}
			\STATE $t_j=\Pi_{k=\frac{j(j-1)}{2}+1}^{\frac{j(j+1)}{2}}(w_t^Tx_{i, k})$
		\ENDFOR 
		\STATE $\tilde{G}(w_t;i)=\left(\sum\limits_{k=0}^{p}(c_{2k}-c_{1k}y_{i,j})t_k (R\rho)^{k+1}
		\right)z_0$. 
		\STATE Update SIGM (Algorithm \ref{alg9}) by $\tilde{G}(w_t;i)$ to obtain $w_{t+1}$. 
				\ENDFOR  
	    \STATE Return $w_{n+1}$
	    
	\end{algorithmic}
\end{algorithm}

	\begin{algorithm}
	\caption{Stochastic Intermediate Gradient Method (SIGM)
	}
	\label{alg9}
	\textbf{Input:}
		The sequences $\{\alpha_i\}_{i\geq 0}$, $\{\beta_i\}_{i\geq 0}$, $\{B_i\}_{i\geq 0}$ functions $d(x)=\frac{||x||^2}{2}$, Bregman distance $V(x,z)=d(X)-d(Z)-\langle \nabla d(z),x-z\rangle$.
		
	\begin{algorithmic}[1]
		\STATE Compute $x_0=\arg \min _{x\in \mathcal{C}}\{d(x)\}$.
		\STATE Let $\xi_0$ be a realization of the random variable $\xi$.
		\STATE Compute $y_0=\arg\min_{x\in \mathcal{C}}\{\beta_0d(x)+\alpha_0\langle G_{\gamma,\beta,\sigma}(x_0; \xi_0),x-x_0\rangle\}$
		\FOR{$k\in \{0\}\cup [T-1]$}
		\STATE Compute $z_k=\arg\min_{x\in \mathcal{C}}\{\beta_kd(x)+\sum_{i=0}^k\alpha_i\langle G_{\gamma,\beta,\sigma}(x_i; \xi_i),x-x_i\rangle\}$
		\STATE Let $x_{k+1}=\eta_kz_k+(1-\eta_k)y_k$
		\STATE Let $\xi_{k+1}$ be a realization of the random variable 
		$\xi$
		\STATE Compute $\hat{x}_{k+1}=\arg\min_{x\in \mathcal{C}}\{\beta_kV(x,z_k)+\alpha_{k+1}\langle G_{\gamma,\beta,\sigma}(x_{k+1}; \xi_{k+1}),x-z_k\rangle\}$
		\STATE Let $w_{k+1}=\eta \hat{x}_{k+1}+(1-\eta_k)y_k$
		\STATE $y_{k+1}=
		\frac{A_{k+1}-B_{k+1}}{A_{k+1}}y_k+\frac{B_{k+1}}{A_{k+1}}w_{k+1}$
		\ENDFOR
		\\
		\STATE Return $y_T$
	\end{algorithmic}
\end{algorithm}

%
%
%

%

\end{document}